\providecommand{\algorithmname}{Algorithm}
\theoremstyle{plain}
\newtheorem{thm}{\protect\theoremname}
\theoremstyle{definition}
\newtheorem{defn}[thm]{\protect\definitionname}
\theoremstyle{plain}
\newtheorem{cor}[thm]{\protect\corollaryname}
\theoremstyle{remark}
\newtheorem{rem}[thm]{\protect\remarkname}
\theoremstyle{lemma}
\newtheorem{lem}[thm]{\protect\lemmaname}
\providecommand{\corollaryname}{Corollary}
\providecommand{\definitionname}{Definition}
\providecommand{\lemmaname}{Lemma}
\providecommand{\remarkname}{Remark}
\providecommand{\theoremname}{Theorem}
\global\long\def\mc#1{\mathcal{#1}}
\global\long\def\mca{\mathcal{A}}
\global\long\def\mco{\mathcal{O}}
\global\long\def\mcs{\mathcal{S}}
\begin{document}

\title{Robust High Dimensional Sparse Regression and Matching Pursuit}
\author{{
Yudong Chen, Constantine Caramanis and Shie Mannor}\footnote{ydchen@utexas.edu, constantine@utexas.edu, shie@ee.technion.ac.il}}

\maketitle

\newcommand{\fix}{\marginpar{FIX}}
\newcommand{\new}{\marginpar{NEW}}

\begin{abstract}
In this paper we consider high dimensional sparse regression, and develop strategies able to deal with arbitrary -- possibly, severe or coordinated -- errors in the covariance matrix $X$. These may come from corrupted data, persistent experimental errors, or malicious respondents in surveys/recommender systems, etc. Such non-stochastic error-in-variables problems are notoriously difficult to treat, and as we demonstrate, the problem is particularly pronounced in high-dimensional settings where the primary goal is {\em support recovery} of the sparse regressor. We develop algorithms for support recovery in sparse regression, when some number $n_1$ out of $n+n_1$ total covariate/response pairs are {\it arbitrarily (possibly maliciously) corrupted}. We are interested in understanding how many outliers, $n_1$, we can tolerate, while identifying the correct support. To the best of our knowledge, neither standard outlier rejection techniques, nor recently developed robust regression algorithms (that focus only on corrupted response variables), nor recent algorithms for dealing with stochastic noise or erasures, can provide guarantees on support recovery. Perhaps surprisingly, we also show that the natural brute force algorithm that searches over all subsets of $n$ covariate/response pairs, and all subsets of possible support coordinates in order to minimize regression error, is remarkably poor, unable to correctly identify the support with even $n_1 = O(n/k)$ corrupted points, where $k$ is the sparsity. This is true even in the basic setting we consider, where all authentic measurements and noise are independent and sub-Gaussian. In this setting, we provide a simple algorithm -- no more computationally taxing than OMP -- that gives stronger performance guarantees, recovering the support with up to $n_1 = O(n/(\sqrt{k} \log p))$ corrupted points, where $p$ is the dimension of the signal to be recovered.
\end{abstract}

\section{Introduction}

Linear regression and sparse linear regression seek to express a response variable as the linear combination of (a small number of) covariates. They form one of the most basic procedures in statistics, engineering, and science. More recently, regression has found increasing applications in the high-dimensional regime, where the number of variables, $p$, is much larger than the number of measurements or observations, $n$. Applications in biology, genetics, as well as in social networks, human behavior prediction and recommendation, abound, to name just a few. The key structural property exploited in high-dimensional regression, is that the regressor is often sparse, or near sparse, and as much recent research has demonstrated, in many cases it can be efficiently recovered, despite the grossly underdetermined nature of the problem (e.g., \cite{chen1999basispursuit,candes2005decoding,candes2007dantzig,davenport2010OMPRIP,wainwright2009sharp}). Another common theme in large-scale learning problems -- particularly problems in the high-dimensional regime -- is that we not only have big data, but we have dirty data. Recently, attention has focused on the setting where the output (or response) variable and the matrix of covariates are plagued by erasures, and/or by stochastic additive noise \cite{loh2012nonconvex,rosenbaum2010sparse,rosenbaum2011improved,chen2012eiv,chen2013eivICML}. Yet many applications, including those mentioned, may suffer from persistent errors, that are ill-modeled by stochastic distribution; indeed, many applications, particularly those modeling human behavior, may exhibit maliciously corrupted data. 

This paper is about extending the power of regression, and in particular, sparse high-dimensional regression, to be robust to this type of noise. We call this {\em deterministic} or {\em cardinality constrained} robustness, because rather than restricting the magnitude of the noise, or any other such property of the noise, we merely assume there is a bound on how many data points, or how many coordinates of every single covariate, are corrupted. Other than this number, we make absolutely no assumptions on what the adversary can do -- the adversary is virtually unlimited in computational power and knowledge about our algorithm and about the authentic points. There are two basic models we consider. In both, we assume there is an underlying generative model: $y = X \beta^{\ast} + e$, where $X$ is the matrix of covariates, $e$ is sub-Gaussian noise. In the {\em row-corruption model}, we assume that each pair of covariates and response we see is either due to the generative model, i.e., $(y_i,X_i)$, or is corrupted in some arbitrary way, with the only restriction that at most $n_1$ such pairs are corrupted. In the {\em distributed corruption model}, we assume that $y$ and {\em each column} of $X$, has $n_1$ elements that are arbitrarily corrupted (evidently, the second model is a strictly harsher corruption model). Building efficient regression algorithms that recover at least the support of $\beta^{\ast}$ accurately subject to even such deterministic data corruption, greatly expands the scope of problems where regression can be productively applied. The basic question is when is this possible -- how big can $n_1$ be, while still allowing correct recovery of the support of $\beta^{\ast}$.

Many sparse-regression algorithms have been proposed, and their properties under clean observations are well understood; we survey some of these results in the next two sections. Also well-known, is that the performance of standard algorithms (e.g., Lasso, Orthogonal Matching Pursuit) breaks down even in the face of just a few corrupted points or covariate coefficients. As more work has focused on robustness in the high-dimensional regime, it has also become clear that the techniques of classical robust statistics such as outlier removal preprocessing steps cannot be applied to the high-dimensional regime \cite{donohoharvard, Huber1981Robstat}. The reason for this lies in the high dimensionality. In this setting, identifying outliers {\it a priori} is typically impossible: outliers might not exhibit any strangeness in the ambient space due to the high-dimensional noise (see \cite{xu2013hrpca} for a further detailed discussion), and thus can be identified only when the true  low-dimensional structure is (at least approximately) known; on the other hand, the true structure cannot be computed by ignoring outliers. Other classical approaches have involved replacing the standard mean squared loss with a trimmed variant or even median squared loss \cite{hampel2011robust}; first, these are non convex, and second, it is not clear that they provide any performance guarantees, especially in high dimensions.

Recently, the works in \cite{laska2009jp,nguyen2011densecorrupt,wright2010dense,xiaodongli} have proposed an approach to handle arbitrary corruption in the \emph{response variable}. As we show, this approach faces serious difficulties when the \emph{covariates} is also corrupted, and is bound to fail in this setting. One might modify this approach in the spirit of Total Least Squares (TLS) \cite{zhu2011tls} to account for noise in the covariates (discussed in Section \ref{sec:relatedwork}), but it leads to highly non convex problems. Moreover, the approaches proposed in these papers are the natural convexification of the (exponential time) brute force algorithm that searches over all subsets of covariate/response pairs (i.e., rows of the measurement matrix and corresponding entries of the response vector) and subsets of the support (i.e., columns of the measurement matrix) and then returns the vector that minimizes the regression error over the best selection of such subsets. Perhaps surprisingly, we show that the brute force algorithm itself has remarkably weak performance. Another line of work has developed approaches to handle stochastic noise or small bounded noise in the covariates \cite{herman2010perturbation, rosenbaum2010sparse,rosenbaum2011improved,loh2012nonconvex,chen2012eiv}. The corruption models in there, however, are different from ours which allows arbitrary and malicious noise; those results seem to depend crucially on the assumed structure of the noise and cannot handle the setting in this paper. 

More generally, even beyond regression, in, e.g., robust PCA and robust matrix completion \cite{chandrasekaran2011siam, candes2009robustPCA, Xu2010RobustPCA, Chen2010RobustCF, lerman2012robustpcaneedle}, recent robust recovery in high dimensions results have for the most part depended on convex optimization formulations. We show in Section \ref{sec:failure} that for our setting, {\it convex-optimization based approaches that try to relax the brute-force formulation fail to recover support, with even a constant number of outliers}. Accordingly, we develop a different line of robust algorithms, focusing on Greedy-type approaches like Matching Pursuit (MP).

In summary, to the best of our knowledge, no robust sparse regression algorithm has been proposed that can provide performance guarantees, and in particular, guaranteed support recovery, under {\it arbitrarily and maliciously corrupted} covariates and response variables.

We believe robustness is of great interest both in practice and in theory. Modern applications often involve ``big but dirty data'', where outliers are ubiquitous either due to adversarial manipulation or to the fact some samples are generated from a model different from the assumed one. It is thus desirable to develop  robust sparse regression procedures. From a theoretical perspective, it is somewhat surprising that the addition of a few outliers can transform a simple problem to a hard one; we discuss the difficulties in more detail in the subsequent sections.

{\bf Paper Contributions:}
In this paper, we propose and discuss a simple (in particular, efficient) algorithm for robust sparse regression, for the setting where both covariates and response variables are arbitrarily corrupted, and show that our algorithm guarantees support recovery under far more corrupted covariate/response pairs than any other algorithm we are aware of. We briefly summarize our contributions here:
\begin{enumerate}
\item We consider the corruption model where $n_1$ rows of the covariate matrix $X$ and the response vector $y$ are arbitrarily corrupted. We demonstrate that other algorithms we are aware of, including standard convex optimization approaches and the natural brute force algorithm, have very weak (if any) guarantees for support recovery.
\item For the corruption model above, we give support recovery guarantees for our algorithm, showing that we correctly recover the support with $n_1 = O(n/(\sqrt{k} \log p))$ arbitrarily corrupted response/covariate pairs.
\item We consider a stronger corruption model, where instead of $n_1$ corrupted rows of the matrix $ X $, {\it each column can have up to $n_1$ arbitrarily corrupted entries}. We show that our algorithm also works in this setting, with precisely the same recovery guarantees. To the best of our knowledge, this problem has not been previously considered.
\end{enumerate}

\section{Problem Setup}

We consider the problem of sparse linear regression. The unknown parameter
$\beta^{*}\in\mathbb{R}^{p}$ is assumed to be $k$-sparse ($k<p$),
i.e., has only $k$ nonzeros. The observations take the form of covariate-response
pairs $(x_{i},y_{i})\in\mathbb{R}^{p}\times\mathbb{R}$, $i=1,\ldots,n+n_{1}$.
Among these, $n$ pairs are authentic samples obeying the following linear model
\[
y_{i}=\left\langle x_{i},\beta^{*}\right\rangle +e_{i},
\]
where $e_{i}$ is additive noise, and $p \ge n$. For corruption, we consider the following two models.

\begin{defn}[Row Corruption]
The $n_{1}$ pairs are arbitrarily corrupted, with \emph{both} $x_{i}$
and $y_{i}$ being potentially corrupted.
\end{defn} 

\begin{defn}[Distributed Corruption]
We allow arbitrary corruption of \emph{any} $n_1/2$ elements of each column of the covariate matrix $X$ and of the response $y$. 
\end{defn} 
In particular, the corrupted entries need not lie in the same $n_1$ rows. Clearly this includes the previous model as a special case up to a constant factor of $2$.

Note that in both models, we impose \emph{no assumption whatsoever on the corrupted pairs}. They might be unbounded, non-stochastic, and even dependent on the authentic samples. They are unconstrained other than in their cardinality -- the number of rows or coefficients corrupted. We illustrate both of these corruption models pictorially in Figure \ref{fig:robReg}.

\textbf{Goal:} Given these observations $\{(x_i,y_i)\}$, the goal is to obtain a reliable estimate $\hat{\beta}$ of $\beta^{*}$ with correct support and bounded error $\left\Vert \hat{\beta}-\beta^{*}\right\Vert _{2}$. A fundamental question, therefore, is to understand in each given model, given $p,n,$ and $k$, how many outliers $(n_{1})$ an estimator can handle. 

\begin{rem} 
This is a strong notion of robustness. In particular, requiring support recovery is a more stringent requirement than requiring, for example, bounded distance to the true solution, or bounded loss degradation. It is also worth noting that robustness here is a completely different notion than robustness in the sense of Robust Optimization (e.g., \cite{BertsimasBrownCaramanis2011}). There, we seek a solution that minimizes the error we incur when an adversary perturbs our loss function. In contrast, here, here is a true generative model that obeys the structural assumptions of the problem (namely, sparsity of $\beta^{\ast}$), but an adversary corrupts the data that ordinarily provide us with the true input-output behavior of that model.
\end{rem}

We emphasize that our setting is fundamentally different from those that only allow corruptions in $y$, and robust sparse regression techniques that only consider corruption in $ y $ are bound to fail in our setting. We elaborate on this point in Section \ref{sec:failure} and \ref{sec:bruteforce}. Moreover, under the distributed corruption model, there is no hope of considering an equivalent model with only corruptions in $ y $: all entries of $ y $ could be corrupted -- an absurd setting to hope for a solution.

\begin{figure}[h]
\begin{center}
\begin{tabular}{cc}
\includegraphics[width=.4\linewidth]{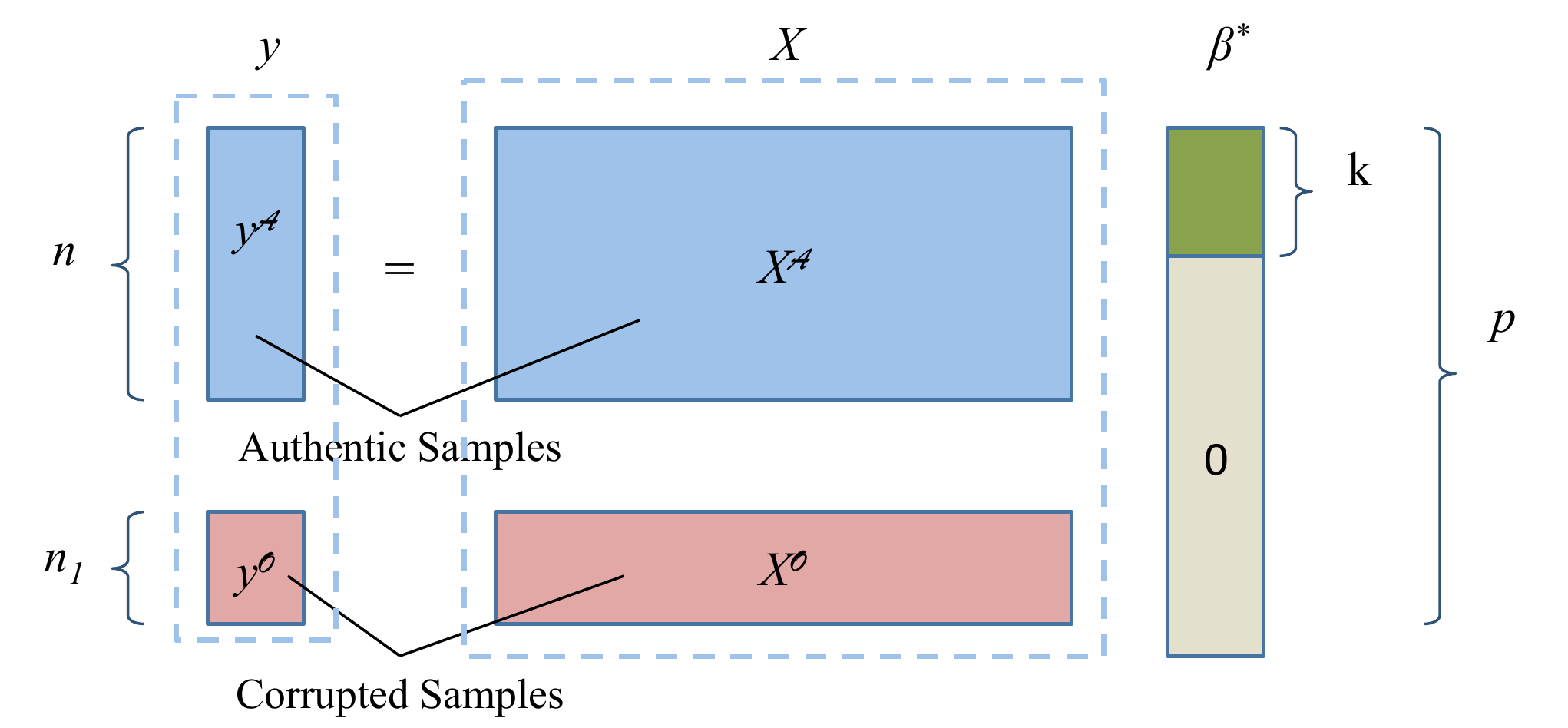} &
\includegraphics[width=.4\linewidth]{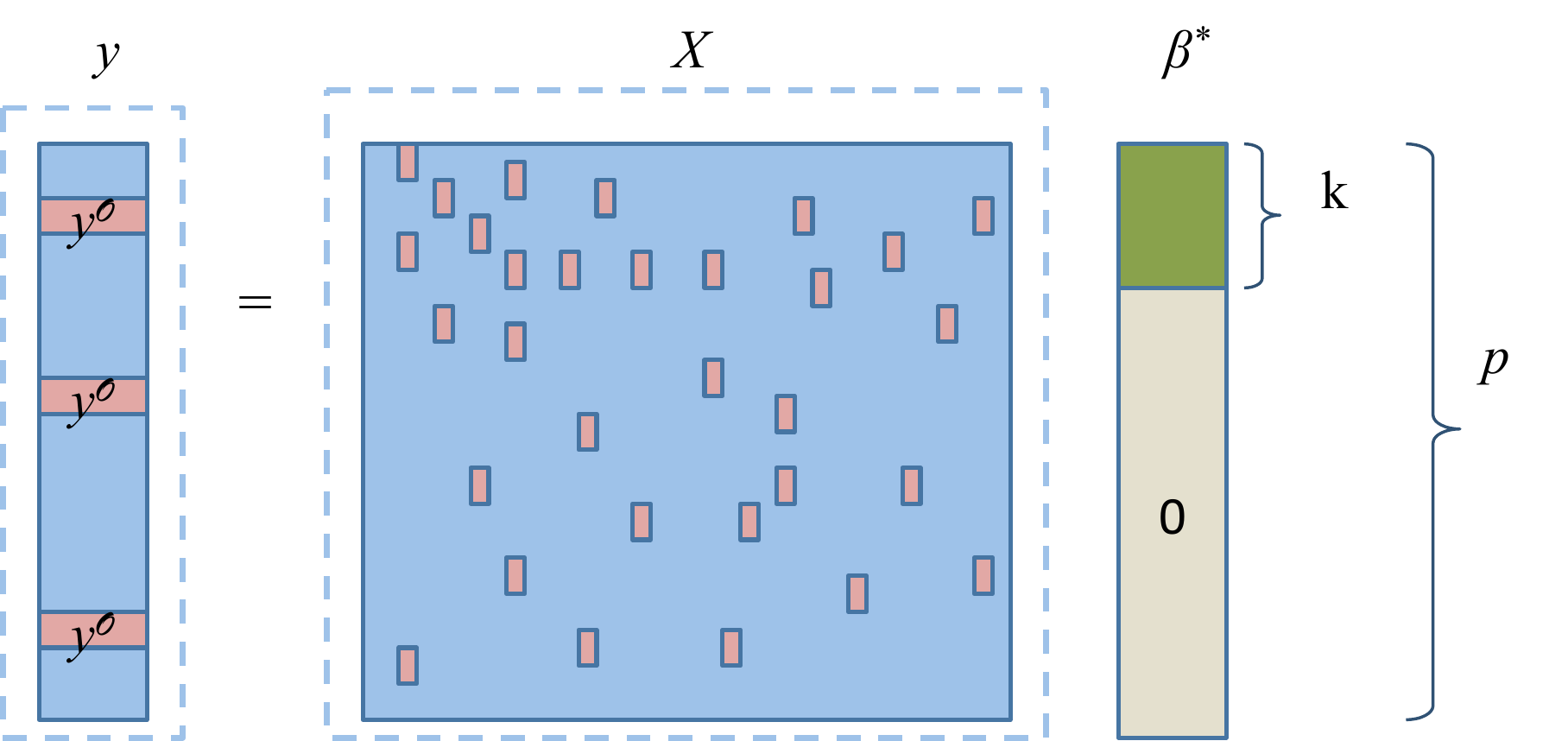} \\ (a) & (b)
%\widgraph{.4\textwidth}{Model1Error.pdf} &
%\widgraph{.4\textwidth}{Model2Error.pdf} \\ (a) & (b)
\end{tabular}
\end{center}
\caption{Consider identifying the dominant genetic
      markers for a disease; entire experiments might fail, or
      different gene expression levels might be erroneously read
      across different experiments. Panel (a) corresponds to the first
      type of error, where $n_1$ samples $(y_i,X_i)$ are arbitrarily
      corrupted. This is our first error model. Panel (b) illustrates an instance of the second
      scenario, and thus our second error model: in each experiment, several covariates may be misread,
      but no single covariate is misread more than $n_1$ times. }
\label{fig:robReg}
\end{figure}

\section{Related Work}
\label{sec:relatedwork}

Under the high-dimensional setting $p\ge n$, there is a large body of literature on sparse recovery when there is no corruption. It is now well-known that recovery of $\beta^{*}$
is possible only when the covariate matrix $ X $
satisfies certain conditions, such as the Restricted Isometry/Eigenvalue Property \cite{candes2005decoding,bickel2009simultaneous}, Mutual Incoherence Condition \cite{donoho2006stable} or Exact Recovery Condition \cite{tropp2004greed}. Various ensembles of random
matrices are shown to satisfy these conditions with high probability.
Many estimators have been proposed, most notably Basis Pursuit (a.k.a.
Lasso) \cite{tibshirani1996lasso,donoho2006stable},
which solves an $\ell_{1}$-regularized least squares problem
\begin{eqnarray*}
\min_{\beta} & & \left\Vert y - X\beta \right\Vert _{2}+\lambda\left\Vert \beta\right\Vert _{1},
\end{eqnarray*}
as well as Orthogonal Matching Pursuit (OMP) \cite{donoho2006stable, tropp2004greed},
which is a greedy algorithm that estimates the support of $\beta^{*}$
sequentially. Both Lasso and OMP, as well as many other estimators,
are guaranteed to recover $\beta^{*}$ with good accuracy when
$X$ is well-conditioned, and the number of observations satisfies $n\gtrsim k\log p.$ (Here we mean there exists a constant $ c $, independent of $ k,n,p $, such that the statement holds. We use this notation throughout the paper.) Moreover, this condition is also shown to be  necessary; see, e.g., \cite{wainwright2009lassoLimit}.

Most existing methods are not robust to outliers; for example, standard
Lasso and OMP fail even if only one entry of $ X $ or $y$ is corrupted. One might consider a natural modification of Lasso in the spirit of Total Least Squares, and solve 
\begin{eqnarray}
\label{eq:TLS}
\min_{\beta, E} \Vert (X-E)\beta - y \Vert_2 + \lambda \Vert \beta \Vert_1 + \eta \Vert E \Vert_*,
\end{eqnarray}
where $ E $ accounts for corruption in the covariate matrix, and $\Vert \cdot \Vert_*$ is a norm. When $ E $ is known to be row sparse (as is the case in our row-corruption model), one might choose $ \Vert \cdot \Vert_*$ to be $\Vert \cdot \Vert_{1,2} $ or $\Vert \cdot \Vert_{1,\infty}$\footnote{$\Vert E \Vert_{1,2} $ ($\Vert E \Vert_{1,\infty}$) is the sum of the $\ell^2$ ($\ell^\infty$, respectively) norms of the rows of $ E $.}; the work in \cite{zhu2011tls} considers using $ \Vert \cdot \Vert_* =\Vert \cdot \Vert_F $ (similar to TLS), which is more suitable when $ E $ is dense yet bounded. The optimization problem \eqref{eq:TLS} is, however, highly non convex due to the bilinear term $ E\beta $, and no tractable algorithm with provable performance guarantees is known. 

Another modification of Lasso accounts for the corruption in the response via an additional variable $ z$ \cite{laska2009jp,nguyen2011densecorrupt,wright2010dense,xiaodongli}:
\begin{eqnarray}
\min_{\beta,z} \; \left\Vert X\beta-y-z\right\Vert _{2}+\lambda\left\Vert \beta\right\Vert _{1}+\gamma\left\Vert z\right\Vert _{1}.
\label{eq:xiaodong_noisy}
\end{eqnarray}
We call this approach Justice Pursuit (JP) after \cite{laska2009jp}. Unlike the previous approach, the problem \eqref{eq:xiaodong_noisy} is convex. In fact, it is the natural convexification of the brute force algorithm:
\begin{eqnarray}
\min_{\beta,z} && \left\Vert X\beta-y-z\right\Vert _{2} \label{eq:bruteforece1}\\
{\rm s.t.}: && \|\beta\|_0 \leq k \nonumber \\
&& \|z\|_0 \leq n_1, \nonumber
\end{eqnarray}
where $ \Vert u \Vert_0 $ denotes the number of nonzero entries in $ u $. It is easy to see (and well known) that the so-called Justice Pursuit relaxation \eqref{eq:xiaodong_noisy} is equivalent to minimizing the Huber loss function plus the $\ell_1$ regularizer, with an explicit relation between $ \gamma$ and the parameter of the Huber loss function \cite{fuchs1999robustregression}.  Formulation \eqref{eq:xiaodong_noisy} has excellent recovery guarantees when \emph{only} the response variable is corrupted, delivering exact recovery under a constant fraction of outliers. However, we show in the next section that a broad class of convex optimization-based approaches, with \eqref{eq:xiaodong_noisy} as a special case, fail when the covariate $X$ is also corrupted. In the subsequent section, we show that even the original brute force formulation is problematic: while it can recover from some number $n_1$ of corrupted rows, that number is order-wise worse than what the algorithm we give can guarantee.

We also note that neither the brute force algorithm above, nor its relaxation, JP, are appropriate for our second model for corruption. Indeed, in this setting, modeling via JP would require handling the setting where {\em every single entry} of the output variable, $y$, is corrupted, something which certainly cannot be done.

For standard linear regression problems in the
classical scaling $n\gg p$, various robust estimators have been proposed,
including $M$-, $R$-, and $S$-estimators \cite{Huber1981Robstat,maronna2006robuststat},
as well as those based on $\ell_{1}$-minimization \cite{kekatos2011robustsensing}.
Many of these estimators lead to non-convex optimization problems,
and even for those that are convex, it is unclear how they can be
used in the high-dimensional scaling with sparse $\beta^{*}$. Another
difficulty in applying classical robust methods to our problems arises
from the fact that the covariates, $x_{i}$, also lie in a high-dimensional
space, and thus defeat many outlier detection/rejection techniques that might
otherwise work well in low-dimensions. Again, for our second model of corruption, outlier detection seems even more hopeless.

\section{Failure of the Convex Optimization Approach}\label{sec:failure}

We consider a broad class of convex optimization-based approaches of the following form:
\begin{eqnarray}
\min_\beta & & f(y-X\beta)  \\
\textrm{s.t.} & & h(\beta) \le R. \nonumber
\end{eqnarray}
Here $R$ is a radius parameter that can be tuned. Both $f(\cdot)$ and $h(\cdot)$ are convex functions, which can be interpreted as a loss function (of the residual) and a regularizer (of $\beta$), respectively. For example, one may take $f(v)=\min_z \|v-z\|_2 + \gamma \|z\|_1$ and $h(\beta)=\|\beta\|_1$, which recovers the Justice Pursuit \eqref{eq:xiaodong_noisy} by Lagrangian duality; note that this $f(v)$ is convex because 
\begin{eqnarray*}
\|\alpha v_1 + (1-\alpha) v_2 - z \|_2 + \gamma \|z\|_1 
\le \alpha (\|v_1-z\|_2 + \gamma \|z\|_1) + (1-\alpha) (\|v_1-z\|_2  + \gamma \|z\|_1),
\end{eqnarray*}
by sub-additivity of norms. The function $f(\cdot)$ can also be any other robust convex loss function including the Huber loss function.

We assume that $f(\cdot)$ and $h(\cdot)$ obey a very mild condition, which is satisfied by any non-trivial loss function and regularizer that we know of. In the sequel we use $ [z_1; z_2] $ to denote the concatenation of two column vectors $ z_1 $ and $ z_2 $. 
\begin{defn}[Standard Convex Optimization (SCO) Condition]
We say $f(\cdot)$ and $h(\cdot)$ satisfy the SCO Condition if $ \lim_{\alpha \rightarrow \infty}f( \alpha v ) =\infty $ for all $v\neq 0$, $f([v_1;v_2]) \ge f([0;v_2])$ for all $v_1, v_2$, and $ h(\cdot) $ is invariant under permutation of coordinates. 
\end{defn}

We also assume $R\ge h(\beta^*)$ because otherwise the formulation is not consistent even when there are no outliers. The following theorem shows that under this assumption, the convex optimization approach fails when both $X$ and $y$ are corrupted. We only show this for our first corruption model, since it is a special case of the second distributed model. As illustrated in Figure \ref{fig:robReg}, let $\mathcal{A}$ and $\mathcal{O}$ be the (unknown) sets of indices corresponding to authentic and corrupted observations, respectively, and $X^{\mathcal{A}}$ and $X^{\mathcal{O}}$ be the authentic and corrupted rows of the covariate matrix $X=\left[x_{1}, \dots, x_{n+n_1}\right]^{\top}$. The vectors $y^{\mathcal{A}}$ and $y^{\mathcal{O}}$ are defined similarly. Also let $\Lambda^*$ be the support of $\beta^*$. With this notation, we have the following.

\begin{thm}
Suppose $f$ and $h$ satisfy the SCO Condition. When $n_1\ge 1$ and $k\ge 1$, the adversary can corrupt $X$ and $y$ in such a way that  for all $R$ with $R\ge h(\beta^*)$ the optimal solution does not have the correct support.
\end{thm}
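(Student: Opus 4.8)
The plan is to exhibit a single adversarial row that simultaneously breaks the formulation for every admissible radius $R$. Since $k\ge1$, pick some $i^{*}\in\Lambda^{*}$ and write $c:=\beta^{*}_{i^{*}}\neq0$; since $k<p$, pick some $j^{*}\notin\Lambda^{*}$. The adversary leaves the $n$ authentic rows untouched. Among the $n_{1}$ rows it is allowed to corrupt, it sets the last one (row $n+n_{1}$) to have covariate vector equal to $M$ in coordinate $j^{*}$ and $0$ in every other coordinate, with response $y_{n+n_{1}}=Mc$, where $M>0$ is a scalar to be chosen large; the remaining $n_{1}-1$ corrupted rows, if any, are set to the zero pair $(0,0)$, which contributes a zero residual for every $\beta$. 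This is a legal corruption of $X$ and $y$ under the row-corruption model for any $n_{1}\ge1$.

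Next I introduce the competitor $\beta'$, defined as $\beta^{*}$ with the values in coordinates $i^{*}$ and $j^{*}$ interchanged; because $\beta^{*}_{j^{*}}=0$, this simply relocates the mass $c$ from $i^{*}$ to $j^{*}$, so $\beta'$ is a coordinate permutation of $\beta^{*}$. The permutation-invariance clause of the SCO Condition then gives $h(\beta')=h(\beta^{*})\le R$, so $\beta'$ is feasible for \emph{every} $R\ge h(\beta^{*})$. Its residual on row $n+n_{1}$ is $Mc-M\beta'_{j^{*}}=Mc-Mc=0$, and on an authentic row $i$ it is $c(x_{i,i^{*}}-x_{i,j^{*}})+e_{i}$, which does not depend on $M$. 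Hence $f(y-X\beta')=:L_{0}$ is a finite constant that is independent of $M$.

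It remains to rule out every feasible $\beta$ with $\beta_{j^{*}}=0$. For such a $\beta$ the residual on the (last) adversarial row is $y_{n+n_{1}}-M\beta_{j^{*}}=Mc$, so the residual vector has the form $[v;Mc]$ for some $v$ that collects the residuals on all earlier rows. The monotonicity clause $f([v_{1};v_{2}])\ge f([0;v_{2}])$ of the SCO Condition then yields $f(y-X\beta)\ge f([0;\dots;0;Mc])$, and the coercivity clause $\lim_{\alpha\to\infty}f(\alpha v)=\infty$ lets us fix $M$ large enough that $f([0;\dots;0;Mc])>L_{0}$. Consequently every feasible $\beta$ with $\beta_{j^{*}}=0$ has strictly larger objective value than the feasible point $\beta'$, so the optimal solution must satisfy $\beta_{j^{*}}\neq0$; since $j^{*}\notin\Lambda^{*}$, its support differs from $\Lambda^{*}$. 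Neither $L_{0}$ nor $f([0;\dots;0;Mc])$ depends on $R$, so the single choice of $M$ works for all $R\ge h(\beta^{*})$.

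The construction is short; the work is in matching each clause of the SCO Condition to its role — permutation-invariance of $h$ is exactly what keeps the wrong-support competitor $\beta'$ feasible no matter how small $R$ is, while the two structural properties of $f$ are exactly what force the optimum to fit the lone adversarial row and therefore to load coordinate $j^{*}$. A minor point is the existence of a minimizer, which is implicit in the theorem statement (and automatic when $h$ has bounded sublevel sets, e.g.\ when $h$ is a norm); in any case the argument above shows that \emph{every} near-optimal $\beta$ already has $\beta_{j^{*}}\neq0$, so no optimizer can recover $\Lambda^{*}$.
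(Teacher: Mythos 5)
Your proof is correct and follows essentially the same strategy as the paper's: plant corrupted rows whose covariates are supported off $\Lambda^{*}$ and whose responses are consistent with a coordinate-permuted copy of $\beta^{*}$, use permutation-invariance of $h$ to keep that wrong-support competitor feasible for every $R\ge h(\beta^{*})$, and use the monotonicity and coercivity clauses of the SCO Condition to force any correct-support candidate to pay an arbitrarily large residual on the corrupted rows. The only differences are cosmetic — you swap a single coordinate and use one nonzero adversarial row instead of permuting the whole support across all $n_{1}$ corrupted rows — and your closing remark on existence of a minimizer is a reasonable extra touch.
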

\begin{proof}
Recall that $y$ = $[y^\mca; y^\mco]$ and $X$ = $[X^\mca; X^\mco]$ with $y^\mca = X^\mca \beta^* +e$, and $ \Lambda^* $ is the true support. The adversary fixes some set $\hat{\Lambda}$ disjoint from the true support $\Lambda^*$ with $\vert\hat{\Lambda}\vert = \vert\Lambda^*\vert$. It then chooses $\hat{\beta}$ and $y^\mco$ such that $\hat{\beta}_{\hat{\Lambda}} = \beta^*_{\Lambda^*}$ $\hat{\beta}_{\hat{\Lambda}^c} =0$,  and $y^\mco = X^\mco \hat{\beta}$ with $X^\mco$ to be determined later. By assumption we have $h(\hat{\beta})=h(\beta^*)\le R$, so $\hat{\beta}$ is feasible. Its objective value is $f(y-X\hat{\beta}) = f([y^\mca-X^\mca_{\hat{\Lambda}} \beta^*_{\Lambda^*}; 0])\le C$ for some finite constant $ C $. The adversary further chooses $X^\mco$ such that $X^\mco_{\Lambda^*} = 0$ and $X^\mco_{\Lambda}$ is large. Any  $\tilde{\beta}$ supported on $\Lambda^*$ has objective value 
\begin{eqnarray*}
f(y-X\tilde{\beta}) =  f([y^\mca-X^\mca \tilde{\beta}; X^\mco (\hat{\beta} -\tilde{\beta})]) =  f([y^\mca-X^\mca \tilde{\beta}; X^\mco_{\Lambda} \beta^*_{\Lambda
^*}]) \ge f([0; X^\mco_{\Lambda} \beta^*_{\Lambda
^*}]),
\end{eqnarray*} 
which can be made bigger than $ C $ under the SCO Condition. Therefore, any solution $\tilde{\beta}$ with the correct support $\Lambda^*$ has a higher objective value than $\hat{\beta}$, and thus is not the optimal solution.
\end{proof}

Our proof proceeds by using a simple corruption strategy. Certainly, there are natural approaches to deal with this specific example, e.g., removing entries of $X$ with large values. But discarding such large-value entries is not enough, as there may exist more sophisticated corruption schemes where simple magnitude-based clipping is ineffective. We illustrate this with a concrete example in the simulation section, where Justice Pursuit along with large-value-trimming fails to recover the correct support. Indeed, this example serves merely to illustrate more generally the inadequacy of a purely convex-optimization-based approach.

More importantly, while the idea of considering an unbounded outlier is not new and has been used in classical Robust Statistics and more recently in \cite{yu2012polynomialRobust}, the above theorem highlights the sharp contrast between the success of convex optimization (e.g., JP) under corruption in only $ y $, and its complete failure when both $ X $ and $y$ are corrupted. Corruptions in $X$ not only break the linear relationship between $y$ and $X$, but also destroy properties of $X$ necessary for existing sparse regression approaches. In the high dimensional setting where support recovery is concerned, there is a fundamental difference between the hardness of the two corruption models.

\section{The Natural Brute Force Algorithm}
\label{sec:bruteforce}

The brute force algorithm \eqref{eq:bruteforece1} can be restated as follows: it looks at all possible $n\times k$
submatrices of $X$ and picks the one that gives the smallest regression
error w.r.t. the corresponding subvector of $y$. Formally, let $X^\mcs_{\Lambda}$
denote the submatrix of $X$ corresponding to row indices $\mc S$
and column indices $\Lambda$, and let $y^{\mcs}$ denote the subvector
of $y$ corresponding to indices $\mcs$. The algorithm solves
\begin{eqnarray}
\min_{\theta \in \mathbb{R}^k, \mcs,\Lambda} & & \left\Vert y^{\mcs}-X^{\mcs}_{\Lambda}\theta\right\Vert _{2}\label{eq:optalg1}\\
s.t. & & \left|\mc S\right|=n,\nonumber \\
& & \left|\Lambda\right|=k. \nonumber
\end{eqnarray}
Suppose the optimal solution is $\hat{\mc S},\hat{\Lambda},\hat{\theta}$.
Then, the algorithm outputs $\hat{\beta}$ with $\hat{\beta}_{\Lambda}=\theta$
and $\hat{\beta}_{\Lambda^{c}}=0$. Note that this algorithm has exponential
complexity in $ n $ and $ k $, and $\mathcal{S}^c$ can be considered as an operational definition of
outliers. We show that even this algorithm has poor performance and cannot handle large $n_{1}$.

To this end, we consider the simple Gaussian design model, where the entries of $ X^\mca $ and $ e $ are independent zero-mean Gaussian random variables with variance $ \frac{1}{n} $ and $ \frac{\sigma_e}{n} $, respectively. The $\frac{1}{n}$ factor is simply for normalization and no generality
is lost. We consider the setting where $\sigma_e^2 = k$ and $\beta_{\Lambda^{*}}^{*}=[1,\ldots,1]^{\top}$. If $ n_1=0 $, existing methods (e.g., Lasso and standard OMP), and the brute force algorithm as well, can recover the support of $ \beta^* $  {with high probability} provided $n\gtrsim k\log p$. Here and henceforth, by \emph{with high probability (w.h.p.)} we mean with probability at least $1-p^{-2}$. However, when there are outliers, we have the following negative result.
\begin{thm}
\label{thm:bruteforce}
Under the above setting, if $n\gtrsim k^{3}\log p$ and $n_{1}\gtrsim\frac{3n}{k+1},$ then
the adversary can corrupt $X$ and $y$ in such a way that the brute force
algorithm does not output the correct support $\Lambda^{*}$.
\end{thm}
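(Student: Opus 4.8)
The plan is to construct an explicit corruption so that some \emph{wrong} support $\hat\Lambda$, together with a shrewdly chosen set of $n$ rows, attains a strictly smaller value in the brute-force program \eqref{eq:optalg1} than \emph{every} feasible solution whose support equals the true support $\Lambda^{*}$; since \eqref{eq:optalg1} minimizes jointly over the support and the row subset, its minimizer then cannot recover $\Lambda^{*}$. The driving numerology compares two residual energies. With support $\Lambda^{*}$ the corrupted rows will be made \emph{unusable}, so essentially all $n$ authentic rows must be retained and the squared residual is $\approx \|e^{\mathcal{A}}\|_{2}^{2} \approx \sigma_{e}^{2} = k$. With a support $\hat\Lambda = (\Lambda^{*}\setminus\{j^{*}\})\cup\{j'\}$, obtained from $\Lambda^{*}$ by swapping one coordinate $j^{*}\in\Lambda^{*}$ for some $j'\notin\Lambda^{*}$, the corrupted rows will instead be \emph{legitimately usable}, so one can keep all $n_{1}$ of them and only $n-n_{1}$ authentic rows; this pays the $k$ units of authentic noise on only a $(1-n_{1}/n)$-fraction of the rows plus the loss of a single unit-magnitude signal coordinate, for a total of $\approx (1-n_{1}/n)(k+O(1))$. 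This beats $k$ as soon as $n_{1}\gtrsim n/k$, and the factor $3$ leaves a gap of order $1$ in the squared residuals, enough to survive the $o(1)$ concentration slack.

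Concretely, pick any $j^{*}\in\Lambda^{*}$ and $j'\notin\Lambda^{*}$, set $\hat\Lambda$ as above, and let $\hat\beta$ be supported on $\hat\Lambda$ with all nonzero entries equal to $1$. On each corrupted row $i$ I would put $x_{i} = M\,u_{i}$ on the coordinates in $\hat\Lambda$ and $0$ elsewhere, and $y_{i} = \langle x_{i},\hat\beta\rangle$, where $M$ is a large constant fixed last and the $u_{i}\in\mathbb{R}^{k}$ are deterministic vectors in general position (possible since $n_{1}>k$ under the hypotheses). Two estimates carry the argument. \emph{(Upper bound for $\hat\Lambda$.)} Taking $\mathcal{S} = \mathcal{O}\cup T$ for any $n-n_{1}$ authentic rows $T$ and $\theta = \hat\beta$, the squared residual on $\mathcal{O}$ is exactly $0$, and the squared residual on $T$ equals $\sum_{i\in T}\bigl((x_{i})_{j^{*}} + e_{i} - (x_{i})_{j'}\bigr)^{2}$, a sum of $n-n_{1}$ independent terms of variance $(k+2)/n$; a $\chi^{2}$ concentration bound makes this at most $(1-n_{1}/n)(k+2) + o(1)$ w.h.p., so the brute-force optimum is at most this. \emph{(Lower bound for $\Lambda^{*}$.)} Any feasible $\mathcal{S}$ containing $k$ or more corrupted rows forces the squared residual to be $\Omega(M^{2})$ --- the $k$ huge equations $\langle (x_{i})_{\hat\Lambda},\theta_{\Lambda^{*}}\rangle = y_{i}$ cannot hold simultaneously, because on $\Lambda^{*}$ one cannot touch the huge coordinate $j'$ and the resulting system is over-determined and generic --- so for $M$ large such $\mathcal{S}$ are never optimal. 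Hence any optimal support-$\Lambda^{*}$ solution keeps at least $n-(k-1)$ authentic rows $T'$, and for \emph{every} such $T'$ and \emph{every} $\theta$ the squared residual is at least $\|P^{\perp}_{X^{T'}_{\Lambda^{*}}}e^{T'}\|_{2}^{2} \ge \|e^{\mathcal{A}}\|_{2}^{2} - \sum_{(k-1)\text{ largest}}e_{i}^{2} - \max_{T'}\|P_{X^{T'}_{\Lambda^{*}}}e^{T'}\|_{2}^{2} \ge k - o(1)$ w.h.p.

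Combining the two bounds: under $n_{1}\ge 3n/(k+1)$ we get $(1-n_{1}/n)(k+2) + o(1) \le k - 1 - \tfrac{3}{k+1} + o(1) < k - o(1)$ for all large $k$, so the brute-force minimum lies strictly below every support-$\Lambda^{*}$ value and the recovered support differs from $\Lambda^{*}$. (When $k\le 2$ one has $n_{1}\ge n$, and the adversary trivially supplies $n$ corrupted rows forming a noiseless dataset on an arbitrary wrong support, with residual $0$.) The hypothesis $n\gtrsim k^{3}\log p$ is exactly what forces every slack term above to be $o(1)$: the cubic arises because controlling $\max_{T'}\|P_{X^{T'}_{\Lambda^{*}}}e^{T'}\|_{2}^{2}$ and $\sum_{(k-1)\text{ largest}}e_{i}^{2}$ \emph{uniformly over the choice of discarded authentic rows} costs an extra $k\log p$ union-bound factor on top of the $k^{2}$-scale moment bounds. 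The step I expect to be the genuine obstacle is this lower bound for support $\Lambda^{*}$: one must show, uniformly over all $\binom{n+n_{1}}{n}$ row subsets, that no admissible combination of at most $k-1$ corrupted rows with a clever choice of which authentic rows to drop can pull the residual below $k - o(1)$ --- in particular that hugeness really does cap the usable corrupted rows at $k-1$, and that, once that cap is in force, discarding the noisiest authentic rows buys only $o(1)$. The remaining ingredients --- the $\chi^{2}$/Gaussian concentration, the general-position choice of the $u_{i}$, and the choice of $M$ --- are routine.
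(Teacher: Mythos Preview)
Your high-level strategy matches the paper's: swap one coordinate of $\Lambda^{*}$ for a fresh $j'$, make the $n_{1}$ corrupted rows fit the wrong support exactly, and compare the residual $\approx(1-n_{1}/n)(k+2)$ for the wrong support against $\approx k$ for the true one. Your argument is correct, but the paper's construction is markedly simpler and sidesteps exactly the step you flag as the ``genuine obstacle.'' Instead of putting the huge entries on $\hat\Lambda$ (which overlaps $\Lambda^{*}$ in $k-1$ coordinates and therefore allows up to $k-1$ corrupted rows to be absorbed by a clever $\theta$), the paper sets $X^{\mco}_{\Lambda^{*}}=0$ and $y^{\mco}_{i}=\sqrt{k}$, with only $X^{\mco}_{j'}=y^{\mco}$ nonzero outside $\Lambda^{*}$. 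Then the lower bound for support $\Lambda^{*}$ becomes a two-line deterministic case split: if $\mcs$ contains \emph{any} corrupted row $i$, its contribution is $(y_{i}-0)^{2}=k$; if not, then $\mcs=\mca$ exactly, and the residual is $\|e+X^{\mca}_{\Lambda^{*}}\delta\|_{2}^{2}$ for a \emph{single fixed} row set, with no union bound over discarded authentic rows at all. So your general-position/$M$ mechanism and the uniform control of $\max_{T'}\|P_{X^{T'}_{\Lambda^{*}}}e^{T'}\|_{2}^{2}$ are unnecessary once the corruption is chosen this way.

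Correspondingly, your explanation of where the $k^{3}$ comes from does not match the paper (and arguably not your own route either). In the paper, the cubic arises from needing a $(1\pm 1/k)$-accurate singular-value bound on the $(k{+}1)$-column Gaussian matrix $[\,e/\sigma_{e}\mid X^{\mca}_{\Lambda^{*}}\,]$, to get $\|e+X^{\mca}_{\Lambda^{*}}\delta\|_{2}^{2}\ge(1-1/k)\sigma_{e}^{2}$ for all $\delta$; standard RIP gives this when $n\gtrsim k\cdot(1/k)^{-2}\log p=k^{3}\log p$. Your union-bound accounting actually yields slack terms of order $k^{2}(\log p)/n$ (the projection onto a $k$-dimensional subspace is $\sim k^{2}/n$, and the $\binom{n}{k-1}$ union bound costs only a $\log n$ factor in the $\chi^{2}$ tail), so your approach would go through already at $n\gtrsim k^{2}\log p$; it is of course still valid under the stated hypothesis $n\gtrsim k^{3}\log p$, but the attribution of the cubic to the row-subset union bound is off.
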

The proof is given in Section \ref{sec:proofs}. We believe the condition $n\gtrsim k^{3}\log p$ is an artifact of our proof and is not necessary. This theorem shows that the brute force algorithm can only handle $O\left(\frac{n}{k}\right)$ outliers. In the next section, we propose a simple, tractable algorithm that {\it outperforms} this brute force algorithm and can handle $O\left(\frac{n}{\sqrt{k}}\right)$ outliers.

\section{Proposed Approach: Robust Matching Pursuit}

The discussion in the last two sections demonstrates that standard techniques
for high-dimensional statistics and robust statistics are inadequate to handle our problem. As mentioned in the introduction, we believe the
key to obtaining an effective robust estimator for high-dimensional data, is simultaneous structure identification and
outlier rejection. In particular, for the sparse recovery problem
where the observations $Z_{i}=(x_{i},y_{i})$ reside in a high-dimensional
space, it is crucial to utilize the low-dimensional structure of $\beta^{*}$
and perform outlier rejection in the {}``right'' low-dimensional space
in which $\beta^{*}$ lies. In this section, we propose a candidate
algorithm, called Robust Matching Pursuit (RoMP), which is based
on this intuition.

Standard MP estimates the support of $\beta^{*}$ sequentially.
At each step, it selects the column of $X$ which has the largest
(in absolute value) inner product with the current residual $ r $, and adds
this column to the set of previously selected columns. The algorithm iterates until some stopping criterion is met. If the sparsity level $k$ of $\beta^{*}$
is known, then one may stop MP after $k$ iterations.

To successfully recover the support of $ \beta^* $, standard MP relies on the fact that for well-conditioned $X$, the
inner product $h(j)=\left\langle r,X_{j}\right\rangle $ is close
to $\beta_{j}$, and thus a large value of $h(j)$ indicates a nonzero
$\beta_{j}$. When outliers are present, MP fails because the $h(j)$'s
may be distorted significantly by maliciously corrupted $x_{i}$'s and
$y_{i}$'s. To protect against outliers, it is crucial
to obtain a robust estimate of $h(j).$ This motivates our robust
version of MP.

The proposed Robust Matching Pursuit algorithm (RoMP) is summarized in Algorithms \ref{alg:Robust-OMP} and \ref{alg:trim_inner_product}.
Similar to standard MP, it selects the columns of
$X$ with highest inner products with the residual. There are two main differences
from standard MP. The key difference is that we compute a robust version of inner product by trimming large points. Also, there is no iterative procedure -- we take the inner products between all the columns of $X$ and the response vector $y$ and selects the top $k$ ones; this leads to a simpler analysis. 

\begin{algorithm}
\caption{\label{alg:Robust-OMP}Robust Matching Pursuit (RoMP)}

Input:~$X,y,k,n_{1}$.

For~$j=1,\ldots,p$,~compute~the~trimmed~inner~product~(see~Algorithm~\ref{alg:trim_inner_product}):
\[
h(j)=\textrm{trimmed-inner-product}(y,X_{j},n_1).
\]

Sort~$\{\left|h(j)\right|\}$~and~select~the~$k$~largest~ones.~

Let~$\hat{\Lambda}$~be~the~set~of~selected~indices.

Set~$\hat{\beta}_j = h(j)$~for~$j\in\hat{\Lambda}$ and $0$ otherwise.

Output:~$\hat{\beta}$
\end{algorithm}
\begin{algorithm}
\caption{\label{alg:trim_inner_product}Trimmed Inner Product}

Input:~$a \in \mathbb{R}^N$,~$b\in\mathbb{R}^{N}$,~$n_{1}$

Compute~$q_{i}=a_{i}b_{i}$,~$i=1,\ldots,N$.

Sort~$\{|q_{i}|\}$ and select the smallest $(N-n_1)$ ones.

Let $\Omega$ be the set of selected indices.

Output:~$h=\sum_{i \in \Omega}p_{i}$.

\end{algorithm}

The key idea behind RoMP is that it effectively reduces a high-dimensional
robust regression problem to a much easier low-dimensional (2-D) problem,
one that is induced by the sparse structure of $\beta^{*}$. Outlier
rejection is performed in this low-dimensional space, and the support
of $\beta^{*}$ is estimated along the way. Hence, this procedure fulfils
our previous intuition of simultaneous structure identification and
outlier rejection.

Our algorithm requires two parameters, $ n_1 $ and $ k $. We discuss how to choose these parameters after we present the performance guarantees in the next section.

\section{Performance Guarantees for RoMP}

We are interested in finding conditions for $(p,k,n,n_{1})$ under
which RoMP is guaranteed to recover $\beta^{*}$ with correct support and small error.
We consider the following sub-Gaussian design model. Recall that a random variable $ Z $ is sub-Gaussian with parameter $ \sigma $ if $ \mathbb{E}[\exp(tZ)]\le \exp(t^2 \sigma^2 /2) $ for all real $t$.
\begin{defn}
[Sub-Gaussian design] Suppose the entries of $X^{\mca}$ are i.i.d. zero-mean sub-Gaussian variables with parameter $\frac{1}{\sqrt{n}}$ and variance $\frac{1}{n}$, and the entries of the additive noise are i.i.d. zero-mean sub-Gaussian variables with parameter $\frac{\sigma_e}{\sqrt{n}}$ and with variance $ \frac{\sigma_e^2}{n} $.
\end{defn}
Note that this general model covers the case of Gaussian, symmetric Bernoulli, and any other distributions with bounded support.

\subsection{Guarantees for the Distributed Corruption Model}

The following theorem characterizes the performance of RoMP, and shows that it can recover the correct support even when the number of outliers scales with $n$. In particular, this shows RoMP can tolerate an $O(1/\sqrt{k})$ fraction of distributed outliers. Recall that with high probability means with probability at least $ 1-p^{-2} $. 

\begin{thm}
\label{thm:RoMP} Under the Sub-Gaussian design model and the distributed corruption model, the following hold with high probability.

(1) The output of RoMP satisfies the following $ \ell_2 $ error bound:
\begin{eqnarray*}
\left\Vert \hat{\beta}-\beta^{*}\right\Vert _{2} & \lesssim & \left\Vert \beta^{*}\right\Vert _{2}\sqrt{1+\frac{\sigma_{e}^{2}}{\left\Vert \beta^{*}\right\Vert _{2}^{2}}}\left(\sqrt{\frac{k\log p}{n}} +\frac{n_{1}\sqrt{k}\log p}{n}\right).
\end{eqnarray*}
(2) If the nonzero entries of $\beta^*$ satisfy $\vert \beta^*_j\vert^2 \ge \left(\Vert \beta^* \Vert_2^2 /n\right)\log p\left(1+\sigma_{e}^{2}/\left\Vert \beta^{*}\right\Vert _{2}^{2}\right)$ , then RoMP correctly identifies the nonzero entries of $\beta^{*}$ provided 
\begin{eqnarray*}
n & \gtrsim & k\log p\cdot\left(1+\sigma_{e}^{2}/\left\Vert \beta^{*}\right\Vert _{2}^{2}\right), \textrm{ and}\\
\frac{n_{1}}{n} & \lesssim & 1/\left({\sqrt{k\left(1+\sigma_{e}^{2}/\left\Vert \beta^{*}\right\Vert _{2}^{2}\right)}\log p}\right).
\end{eqnarray*}
\end{thm}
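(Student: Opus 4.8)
The plan is to view RoMP as $p$ decoupled one-dimensional robust estimation problems, one per column: for each $j$ the output coordinate $h(j)$ is a trimmed mean of the scalars $q^{(j)}_i:=y_iX_{ij}$, and the whole theorem follows once we control $h(j)-\beta^*_j$ uniformly in $j$. I would split the argument into a purely combinatorial reduction, a concentration estimate for the ``clean'' inner product, and a concentration estimate for the bias introduced by trimming. For the reduction, fix a column $j$: under the distributed model at most $n_1/2$ entries of the $j$-th column of $X$ and at most $n_1/2$ entries of $y$ are corrupted, so at most $n_1$ of the products $q^{(j)}_i$ are contaminated; let $G_j$ be the set of clean indices, so $|G_j|\ge n$, and on $G_j$ we have $q^{(j)}_i=(\langle x_i,\beta^*\rangle+e_i)X_{ij}$ with authentic, design-distributed entries. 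Put $\tilde h(j):=\sum_{i\in G_j}q^{(j)}_i$ and let $\Omega_j$ be the index set retained by the trimming, so $h(j)-\tilde h(j)=\sum_{i\in\Omega_j\setminus G_j}q^{(j)}_i-\sum_{i\in G_j\setminus\Omega_j}q^{(j)}_i$. The second sum runs over at most $n_1$ clean coordinates, namely some of the largest clean ones ejected from $\Omega_j$; the first sum has at most $n_1$ terms, each of magnitude at most the largest magnitude retained in $\Omega_j$, hence at most the smallest magnitude among the ejected clean coordinates. With $|q^{(j)}_{(1)}|\ge|q^{(j)}_{(2)}|\ge\cdots$ denoting the order statistics of $\{|q^{(j)}_i|:i\in G_j\}$, both sums are therefore at most $\sum_{t\le n_1}|q^{(j)}_{(t)}|$, giving the deterministic bound
\[
\bigl|h(j)-\tilde h(j)\bigr|\;\le\;2\sum_{t=1}^{n_1}\bigl|q^{(j)}_{(t)}\bigr|,
\]
which uses nothing about the corruption beyond its cardinality and is where the ``outlier rejection in the low-dimensional space induced by $\beta^*$'' intuition behind RoMP is made rigorous.

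Next, concentration of the oracle inner product. Write $q^{(j)}_i=\beta^*_jX_{ij}^2+r^{(j)}_i$ with $r^{(j)}_i:=X_{ij}\bigl(\sum_{l\ne j}\beta^*_lX_{il}+e_i\bigr)$, so that $\mb E q^{(j)}_i=\beta^*_j/n$ and $r^{(j)}_i$ is mean-zero sub-exponential of scale $\asymp\tau/n$, where $\tau:=\sqrt{\|\beta^*\|_2^2+\sigma_e^2}=\|\beta^*\|_2\sqrt{1+\sigma_e^2/\|\beta^*\|_2^2}$. Bernstein's inequality together with a union bound over the $p$ columns give, with high probability and uniformly in $j$, $\bigl|\sum_{i\in G_j}X_{ij}^2-1\bigr|\lesssim\sqrt{\log p/n}+n_1/n$ and $\bigl|\sum_{i\in G_j}r^{(j)}_i\bigr|\lesssim\tau\sqrt{\log p/n}$, hence $\bigl|\tilde h(j)-\beta^*_j\bigr|\lesssim\tau\sqrt{\log p/n}+|\beta^*_j|\,n_1/n$.

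The main obstacle is bounding the trimming bias $\sum_{t\le n_1}|q^{(j)}_{(t)}|$ uniformly in $j$. Using $|q^{(j)}_i|\le|\beta^*_j|X_{ij}^2+|r^{(j)}_i|$, this reduces to controlling $\max_{|S|=n_1}\sum_{i\in S}X_{ij}^2$ and $\max_{|S|=n_1}\sum_{i\in S}|r^{(j)}_i|$, i.e.\ sums of the $n_1$ largest order statistics of $\Theta(n)$ i.i.d.\ sub-exponential variables. Such extremal $\ell_1$-type sums must be controlled uniformly over the $\binom{n+n_1}{n_1}$ subsets of size $n_1$ and over the $p$ columns; applying Bernstein to each fixed subset and balancing the entropy $\log\binom{n+n_1}{n_1}\asymp n_1\log(n/n_1)$ against the sub-exponential tail yields, with high probability and uniformly in $j$,
\[
\sum_{t=1}^{n_1}\bigl|q^{(j)}_{(t)}\bigr|\;\lesssim\;\frac{|\beta^*_j|+\tau}{n}\bigl(n_1\log(n/n_1)+\log p\bigr)\;\lesssim\;\bigl(|\beta^*_j|+\tau\bigr)\,\frac{n_1\log p}{n},
\]
the last step using $n_1\log(n/n_1)+\log p\lesssim n_1\log p$ (since $n\le p$ and $n_1\ge1$). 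Getting these logarithmic factors right --- in particular, ensuring that the union over all size-$n_1$ subsets costs at most one extra $\log p$ --- is the real technical work, and is what pins down the precise form of the $n_1$ threshold.

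Finally, assembly. Set $\rho:=n_1\log p/n$ and $\delta:=\tau\bigl(\sqrt{\log p/n}+n_1\log p/n\bigr)$; Steps 1--3 give, with high probability and uniformly in $j$, $|h(j)-\beta^*_j|\lesssim\rho\,|\beta^*_j|+\delta$. For part (1), write $\|\hat\beta-\beta^*\|_2^2=\sum_{j\in\hat\Lambda}(h(j)-\beta^*_j)^2+\sum_{j\in\Lambda^*\setminus\hat\Lambda}(\beta^*_j)^2$; the first sum is $\lesssim\rho^2\|\beta^*\|_2^2+k\delta^2\lesssim k\tau^2\rho^2+k\delta^2\lesssim k\delta^2$, using $\|\beta^*\|_2^2\le k\tau^2$ and $\delta\ge\tau\rho$; and since $|\hat\Lambda|=|\Lambda^*|=k$, each missed coordinate $j\in\Lambda^*\setminus\hat\Lambda$ can be matched to a distinct spurious $j'\in\hat\Lambda\setminus\Lambda^*$ with $|h(j)|\le|h(j')|$, so (as $\beta^*_{j'}=0$ makes $|h(j')|\lesssim\delta$) we get $|\beta^*_j|\le|h(j)|+|h(j)-\beta^*_j|\lesssim\delta+\rho|\beta^*_j|$, whence $|\beta^*_j|\lesssim\delta$ and the second sum is $\lesssim k\delta^2$ too; therefore $\|\hat\beta-\beta^*\|_2\lesssim\sqrt k\,\delta=\|\beta^*\|_2\sqrt{1+\sigma_e^2/\|\beta^*\|_2^2}\bigl(\sqrt{k\log p/n}+n_1\sqrt k\log p/n\bigr)$, with no lower bound on the entries of $\beta^*$ required. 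For part (2), for $j'\notin\Lambda^*$ we have $|h(j')|\lesssim\delta$, while for $j\in\Lambda^*$ the hypothesis $|\beta^*_j|^2\ge(\|\beta^*\|_2^2/n)\log p\,(1+\sigma_e^2/\|\beta^*\|_2^2)$ reads $|\beta^*_j|\ge\tau\sqrt{\log p/n}$; under the conditions $n\gtrsim\kappa\log p$ and $n_1/n\lesssim1/(\sqrt\kappa\log p)$, with $\kappa:=k(1+\sigma_e^2/\|\beta^*\|_2^2)$ (and a suitable absolute constant in the hypothesis), $\rho$ is small and $\delta\lesssim\min_{j\in\Lambda^*}|\beta^*_j|$, so $|h(j)|\ge(1-O(\rho))|\beta^*_j|-O(\delta)>\delta\ge|h(j')|$ for all such $j,j'$; hence the top-$k$ selection returns exactly $\Lambda^*$.
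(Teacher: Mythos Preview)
Your proposal is correct and follows the same overall strategy as the paper: decompose $h(j)$ into the full ``clean'' inner product over $G_j$ plus the two trimming correction terms (ejected inliers and surviving outliers), control each piece uniformly in $j$, and then assemble. The decomposition, the use of $|G_j|\ge n$, and the observation that surviving outliers are dominated termwise by ejected inliers are exactly what the paper does.

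The one genuine difference is how the trimming bias is bounded. You treat $\sum_{t\le n_1}|q^{(j)}_{(t)}|$ as an extremal sum of order statistics and invoke Bernstein uniformly over all $\binom{n+n_1}{n_1}$ subsets, balancing the subset entropy $n_1\log(n/n_1)$ against the sub-exponential tail. The paper avoids this entirely: it simply bounds each trimmed inlier termwise by $\max_{i\in G_j}|X_{ij}|\cdot\max_{i\in G_j}|y_i|$, applies a sub-Gaussian maximum bound to each factor separately to get $\max|X_{ij}|\lesssim\sqrt{(\log p)/n}$ and $\max|y_i|\lesssim\tau\sqrt{(\log p)/n}$, and multiplies by $n_1$. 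This cruder product-of-maxima argument already yields $n_1(\log p)\tau/n$, the same bound you obtain, with no union over subsets needed; what you flag as ``the real technical work'' is in fact bypassed. Your route would pay off if one wanted sharper constants or a refined dependence on $n_1/n$, but for the stated theorem the paper's two-line argument suffices. Your assembly of part~(1), in particular the matching argument showing $|\beta^*_j|\lesssim\delta$ for missed coordinates, is more explicit than the paper's somewhat terse justification of the same step.
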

The proof of the theorem is given in Section \ref{sec:proofs}. A few remarks are in order.
\begin{enumerate}
\item We emphasize that {\bf knowledge of the exact number of outliers is not needed} -- $n_1$ can be any {\bf upper bound} of the number of outliers, because by definition the adversary can change $n_1$ entries in each column arbitrarily, and changing less than $n_1$ of them is of course allowed.  The theorem holds even if there are less than $n_1$ outliers. Of course, this would result in sub-optimal bounds in the estimation due to over-conservativeness. In practice, cross-validation could be quite useful here.

\item We wish to note that essentially all robust statistical procedures we are aware of have the same character noted above. This is true even for the simplest algorithms for robustly estimating the mean. If an upper bound is known on the fraction of corrupted points, one computes the analogous trimmed mean. Otherwise, one can simply compute the median, and the result will have controlled error (but will be suboptimal) as long as the number of corrupted points is less than 50\% -- something which, as in our case, and every case, is always impossible to know simply from the data.

\item In a similar spirit, the requirement of the knowledge of $ k $ can also be relaxed. For example, if we use some $ k'> k $ instead of $ k $, then under the theorem continues to hold in the sense that RoMP identifies a superset (with size $ k' $) of the support of $ \beta^* $, and the $ \ell_2 $ error bound holds with $ k $ replaced by $ k' $. Standard procedures of estimating the sparsity level (e.g. cross-validation) can also be applied in our setting. 

\item Also note that the term $\sigma_e^{2}/\left\Vert \beta^{*}\right\Vert _{2}^{2}$ has a natural interpretation of signal to noise ratio.
\end{enumerate}

\subsection{Guarantee for the Row Corruption Model}
It follows directly from Theorem \ref{thm:RoMP} that RoMP can handle $n_1/2$ corrupted rows under the same condition. In fact, a slightly stronger results holds: RoMP can handle $ n_1 $ corrupted rows. This is the content of the follow theorem, with its proof given in Section \ref{sec:proofs}.
\begin{cor}
\label{cor:RoMP_row}
Under the Sub-Gaussian design model and the row corruption model with at most $n_1$ corrupted rows, the conclusions of Theorem \ref{thm:RoMP} holds. 
\end{cor}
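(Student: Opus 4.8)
\emph{Proof plan.} The plan is to observe that the proof of Theorem~\ref{thm:RoMP} touches the corruption model through a single deterministic quantity: for each column $j$, the number of products $q_i = x_{ij}y_i$ formed in Algorithm~\ref{alg:trim_inner_product} that have been tampered with, i.e.\ that differ from their authentic values $x_{ij}^{\mca}y_i^{\mca}$. Call this index set $C_j$. In the distributed model one has $|C_j|\le n_1$ because $C_j$ is contained in the union of the at most $n_1/2$ corrupted entries of $X_j$ and the at most $n_1/2$ corrupted entries of $y$. Everything else that enters the analysis --- concentration of the clean sum $\sum_{i\in\mca}x_{ij}^{\mca}y_i^{\mca}=\langle X_j^{\mca},y^{\mca}\rangle$ around $\beta_j^{*}$, the uniform tail bound $\max_i|x_{ij}^{\mca}y_i^{\mca}|\lesssim\|\beta^{*}\|_2\sqrt{1+\sigma_e^2/\|\beta^{*}\|_2^2}\,(\log p)/n$ from a union bound over the $\le np$ authentic products, and the well-conditioning of $X^{\mca}$ --- is a statement about the $n$ authentic rows only and is indifferent to how the corruption is laid out. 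So it suffices to re-derive $|C_j|\le n_1$ in the row model, running RoMP with trimming level $n_1$.

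First I would note that in the row model an authentic pair $(x_i,y_i)$ contributes its authentic product $x_{ij}^{\mca}y_i^{\mca}$ to column $j$ simultaneously for every $j$, so for each $j$ the tampered set $C_j$ is contained in the common set of corrupted rows, whose size is at most $n_1$ by hypothesis. This is exactly the inequality the proof of Theorem~\ref{thm:RoMP} requires, with $n_1$ playing both roles: the per-column bad-count bound and the trimming parameter of Algorithm~\ref{alg:trim_inner_product}. The rest then transfers verbatim. In particular the purely combinatorial core still holds: since the kept set $\Omega$ consists of the $n$ smallest $|q_i|$, each retained corrupted product is dominated in magnitude by each of the discarded authentic products, and the number of retained corrupted indices equals the number of discarded authentic indices, so together with the symmetric bound on the discarded authentic mass one gets $\bigl|h(j)-\langle X_j^{\mca},y^{\mca}\rangle\bigr|\le 2n_1\max_i|x_{ij}^{\mca}y_i^{\mca}|$; combining this with the probabilistic estimates above yields the stated $\ell_2$ error bound, and under the minimum-signal condition the stated support recovery, now with $n_1$ equal to the number of corrupted rows.

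The only genuine content --- and the only thing that really needs checking --- is that the factor of $\tfrac12$ in the definition of the distributed corruption model is nowhere exploited in the proof of Theorem~\ref{thm:RoMP} beyond producing $|C_j|\le n_1$. Reading that proof, the split into $n_1/2$ corrupted covariate entries plus $n_1/2$ corrupted response entries serves only to cover the worst case in which the two corruption patterns sit on disjoint rows and would otherwise give up to $2\cdot(n_1/2)=n_1$ bad products per column; when whole rows are corrupted this separation cannot occur, so the slack is simply unnecessary. Hence no step changes and one recovers the full $n_1$ rather than $n_1/2$. I do not expect any obstacle beyond this bookkeeping: the probabilistic part of the proof never refers to the corruption at all, and the trimming inequality is deterministic.
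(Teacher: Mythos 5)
Your proposal is correct and is essentially the paper's own argument, just spelled out in more detail: the paper's proof likewise observes that under $n_1$ corrupted rows the clean index set $\mathcal{A}_j$ still has cardinality at least $n$ (equivalently, at most $n_1$ products per column are tampered with), so the trimming analysis of Theorem \ref{thm:RoMP} goes through unchanged. Your additional remark about why the factor of $\tfrac12$ in the distributed model is not needed here is the right explanation for why the row model gets the full $n_1$.
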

Therefore, in particular, our algorithm is orderwise stronger than the Brute Force algorithm, in terms of the number of outliers it can tolerate while still correctly identifying the support.

\section{Experiments}
\label{sec:expt}

In this section, we report some simulation results for the performance of RoMP (Algorithm \ref{alg:Robust-OMP}) on synthetic data. The performance is measured in terms of support recovery (the number of non-zero locations of $\beta^*$ that are correctly identified), and also relative $\ell_2$-error ($\Vert \hat{\beta}-\beta^*\Vert_2 / \Vert \beta^* \Vert_2$).  The authentic data are generated under the sub-Gaussian Design model using Gaussian distribution with $p=4000,n=1600,k=10$ and $\sigma_e=2$, with the non-zero elements of $\beta^*$ being randomly assigned to $\pm 1$. 

For comparison, we also apply standard Lasso and JP \cite{laska2009jp,xiaodongli}  to the same data. For these algorithms, we search for the values of the tradeoff parameters $ \lambda, \gamma $ that yield the smallest $ \ell_2$-errors, and then estimate the support using the location of the largest $ k $ entries of $ \hat{\beta} $. It is also interesting to ask whether a simple modification of JP would perform well. While not analyzed in any of the papers that discuss JP, we consider JP with two different pre-processing procedures, both of which aim to detect and correct the corrupted entries in $ X $ directly. The first one, dubbed JP-fill, finds the set $ E $ of the largest $ \frac{n_1}{n} $ portion of the entries of $ X $, and then scales them to have unit magnitude. The second one, dubbed JP-row, discards the $ n_1 $ rows of $ X $ that contain the most entries in $ E $.

The corrupted rows $(X^\mco, y^\mco)$ are generated by the following procedure:
\begin{quote}
Let  $$ 
\theta^* = \arg \min_{\theta\in \mathbb{R}^{p-k}: \Vert \theta \Vert_1 \le \Vert \beta^* \Vert_1} \Vert y^\mca - X^\mca_{(\Lambda^*)^\top} \theta \Vert_2.
$$ 
Set $ X^\mco_{\Lambda^*} = \frac{3}{\sqrt{n}} A $, where $ A $ is a random $ \pm 1 $ matrix of dimension $ n_1 \times k $, and $ y^\mco = X^\mco_{\Lambda^*} (-\beta^*) $. For $i=1,\ldots,n_1$, further set 
$$ 
X^\mco_{i,(\Lambda^*)^c} = \left(y^\mco_i/(  B_{i}^\top \theta^*)\right) \cdot B_{i}^\top,
$$ 
where $ B_i $ is a $ (n-k) $-vector with i.i.d. standard Gaussian entries. 
\end{quote}

The results are shown in Figures \ref{fig:2}. It can be observed that RoMP performs better than Lasso and JP for both metrics, especially when the number of outliers is large. The $ \ell_2 $-errors of Lasso and JP flatten out because it returns a near-zero solution. The pre-processing procedures do not significantly improve performance of JP, which highlights the difficulty of performing outlier detection in high dimensions.

\begin{figure}[ht]
\vskip 0.2in
\begin{center}
\begin{tabular}{cc}
\includegraphics[scale=.55]{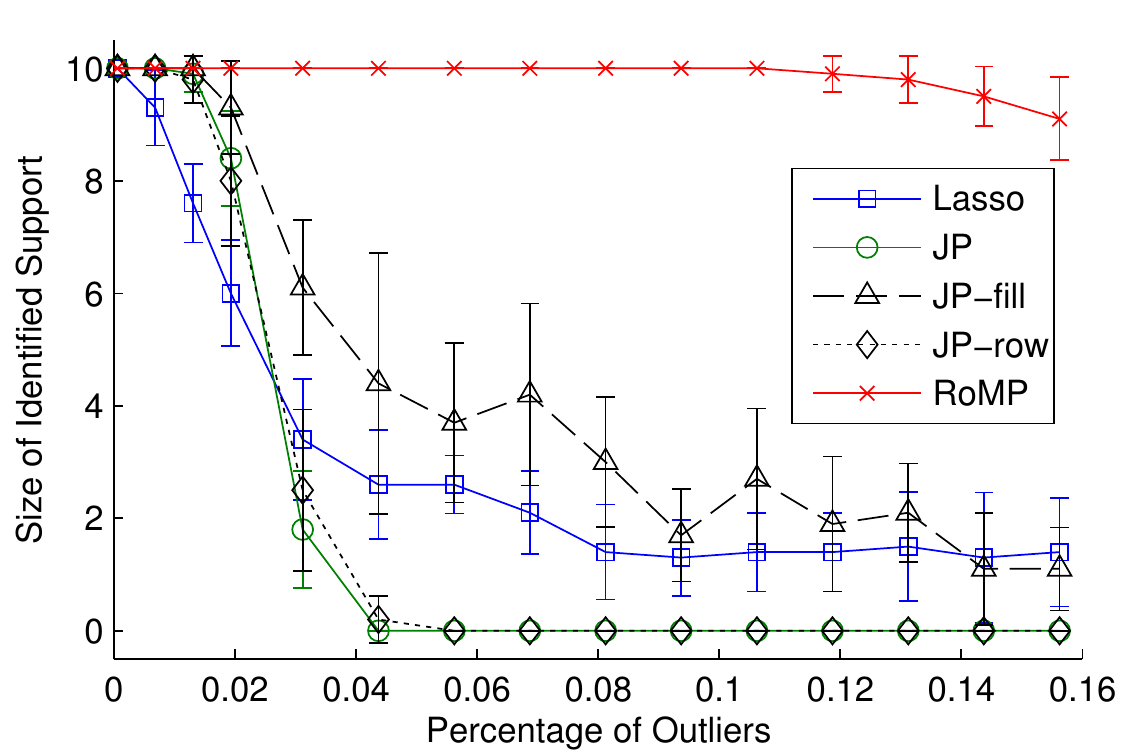} &
\includegraphics[scale=.55]{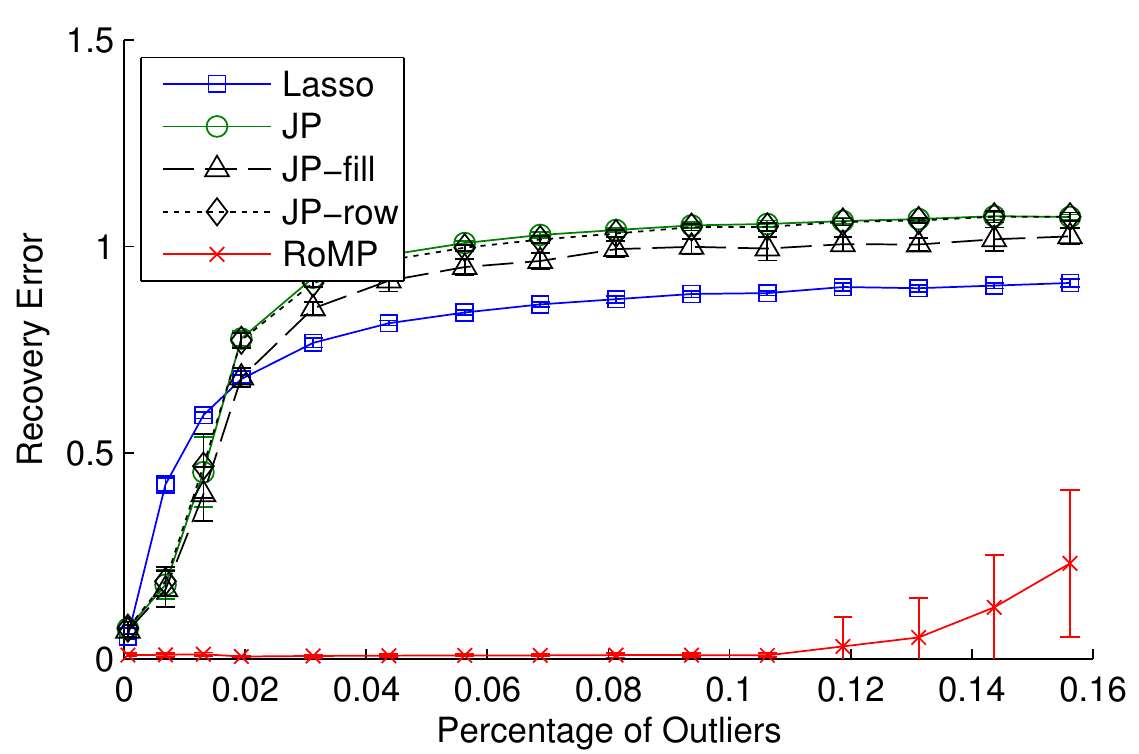} \\
(a) & (b)
\end{tabular}
\caption{Panel (a) shows the relative support recovery for different methods,  while Panel (b) shows the $\ell_2$ recovery errors. In both panels, the error bars correspond to one standard deviation. RoMP signficantly outperforms all other methods under both metrics. We note that the $ \ell_2 $ error of the other methods flattens out and error bars shrink to zero, because after about 4\% outliers, they return a near-zero solution.}
\label{fig:2}
\end{center}
\vskip -0.2in
\end{figure}

\section{Proofs}
\label{sec:proofs}

In this section, we turn to the proofs of Theorem \ref{thm:bruteforce}, Theorem \ref{thm:RoMP}, and Corollary \ref{cor:RoMP_row}, with some technical aspects of the proofs deferred to the appendix.

\subsection{Proof of Theorem 2}

For simplicity we assume $\Lambda^{*}=\{1,\ldots,k\}$, $\mathcal{A}=\{1,\ldots,n\}$,
and $\mco=\{n+1,\ldots,n+n_{1}\}$. We will show that the adversary
choose $y^{\mco}$ and $X^{\mco}$ in such a way that any ``correct''
solution of the form $(\theta,\mathcal{S},\Lambda^{*})$ (i.e., with
the correct support $\Lambda^{*})$ is not optimal because an
alternative solution $(\hat{\theta},\hat{\mathcal{S}},\hat{\Lambda})$
with $\hat{\theta}=[1,\ldots,1]^{\top}$, $\hat{\mcs}=\{n_{1}+1,\ldots,n+n_{1}\}$,
$\hat{\Lambda}=\{2,\ldots,k,k+1\}$ has smaller objective value.

Now for the details. The adversary chooses $\left(y^{\mco}\right)_{i}=\sqrt{k}$
for all $i$, $X_{\Lambda^{*}}^{\mco}=0$, and $X_{k+1}^{\mco}=y^{\mco}$,
hence $y^{\mco}-X_{\hat{\Lambda}}^{\mco}\hat{\theta}=0$. To compute
the objective values of the ``correct'' solution and the alternative
solution, we need a simple technical lemma, which follows from standard
results for the norms of random Gaussian matrix. The proof is given in the appendix.

\begin{lem}
\label{lem:tech_brute_force}If $n\gtrsim k^{3}\log p$, we have
\begin{eqnarray*}
\left\Vert e+X_{\Lambda^{*}}^{\mca}\delta\right\Vert _{2}^{2} & \ge & \left(1-\frac{1}{k}\right))\sigma_{e}^{2},\forall\delta\in\mathbb{R}^{k}\\
\left\Vert e^{\hat{\mathcal{S}}/\mco}+X_{1}^{\hat{\mathcal{S}}/\mco}-X_{k+1}^{\hat{\mathcal{S}}/\mco}\right\Vert _{2}^{2} & \le & \left(1+\frac{1}{k}\right)\left(1-\frac{n_{1}}{n}\right)(\sigma_{e}^{2}+2)
\end{eqnarray*}
 with high probability.
\end{lem}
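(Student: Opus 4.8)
The plan is to establish the two bounds separately, each by controlling the singular values of a suitable Gaussian submatrix and then absorbing the cross-terms. For the first inequality, observe that $\|e + X^{\mca}_{\Lambda^*}\delta\|_2^2$ is minimized over $\delta \in \mathbb{R}^k$ by projecting $e$ onto the orthogonal complement of the column span of $X^{\mca}_{\Lambda^*}$; hence the minimum equals $\|P^{\perp} e\|_2^2$ where $P^{\perp} = I - X^{\mca}_{\Lambda^*}(X^{\mca}_{\Lambda^*})^{\dagger}$. Since $e$ is independent of $X^{\mca}_{\Lambda^*}$ and has i.i.d.\ entries of variance $\sigma_e^2/n$, conditioning on $X^{\mca}_{\Lambda^*}$ we get $\mathbb{E}\|P^{\perp}e\|_2^2 = \frac{\sigma_e^2}{n}(n-k) = \sigma_e^2(1 - k/n)$, and a standard $\chi^2$-type concentration bound (e.g.\ Laurent--Massart) shows $\|P^{\perp}e\|_2^2 \ge \sigma_e^2(1 - k/n) - O(\sigma_e^2\sqrt{\log p / n})$ with probability at least $1 - p^{-c}$. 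When $n \gtrsim k^3 \log p$, both the $k/n$ term and the $\sqrt{\log p/n}$ term are $O(1/k)$ (indeed much smaller), so the bound is at least $(1 - 1/k)\sigma_e^2$ as claimed.

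For the second inequality, write $v := e^{\hat{\mcs}/\mco} + X_1^{\hat{\mcs}/\mco} - X_{k+1}^{\hat{\mcs}/\mco}$. This is a vector of length $|\hat{\mcs}/\mco| = n - n_1$ (the authentic rows retained in $\hat{\mcs}$), with independent entries: each entry is a sum of three independent zero-mean terms, namely a noise entry (variance $\sigma_e^2/n$) and two covariate entries (each variance $1/n$), so each entry has variance $(\sigma_e^2 + 2)/n$ and is sub-Gaussian with the corresponding parameter. Thus $\|v\|_2^2$ is a sum of $n - n_1$ independent sub-exponential variables with mean $\frac{\sigma_e^2+2}{n}(n-n_1) = (1 - n_1/n)(\sigma_e^2+2)$. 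Applying a Bernstein-type upper tail bound gives $\|v\|_2^2 \le (1-n_1/n)(\sigma_e^2+2) + O\!\left((\sigma_e^2+2)\sqrt{\log p/n}\right)$ with probability $\ge 1 - p^{-c}$, and again under $n \gtrsim k^3 \log p$ the slack is $O(1/k)$ times the leading term, yielding the stated $(1 + 1/k)(1 - n_1/n)(\sigma_e^2 + 2)$ bound. A union bound over the two events (and, if one wants the statement to hold for the specific fixed index sets $\hat{\mcs}, \hat{\Lambda}$, there is nothing further to union over since those sets are deterministic) completes the argument.

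The main obstacle, such as it is, is bookkeeping rather than anything deep: one must be careful that in the second bound the relevant rows $\hat{\mcs}/\mco$ are exactly the authentic rows (so the entries genuinely follow the sub-Gaussian design and are independent of each other), and that the $\sqrt{\log p / n}$ concentration slack is genuinely dominated once $n \gtrsim k^3 \log p$ — this is where the (admittedly loose, as the authors note) cubic dependence on $k$ is spent. One should also double-check the first bound does not secretly need a lower bound on the smallest singular value of $X^{\mca}_{\Lambda^*}$; it does not, because we only use the projection identity and never invert $X^{\mca}_{\Lambda^*}$, but verifying that $P^{\perp}$ has rank exactly $n - k$ (i.e.\ $X^{\mca}_{\Lambda^*}$ has full column rank $k$ w.h.p.) is a one-line consequence of the same Gaussian-matrix singular value bounds and should be stated for cleanliness.
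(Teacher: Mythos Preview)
Your proposal is correct, and for the second inequality it is essentially the natural argument (the paper's appendix in fact only writes out the first inequality). For the first inequality, however, the paper takes a different route: it rewrites $e + X_{\Lambda^*}^{\mca}\delta = Z_1 \theta'$ with $\theta' = [\sigma_e;\delta]$ and $Z_1 = [\sigma_e^{-1} e \mid X_{\Lambda^*}^{\mca}]$, an $n\times(k{+}1)$ matrix with i.i.d.\ $\mathcal{N}(0,1/n)$ entries, and then invokes a near-isometry/RIP bound (Lemma~5.1 of Baraniuk et al.) with isometry constant of order $1/k$; since $\|\theta'\|_2^2 \ge \sigma_e^2$, the lower RIP bound gives $\|Z_1\theta'\|_2^2 \ge (1-1/k)\sigma_e^2$ uniformly in $\delta$. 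This is precisely where the hypothesis $n \gtrsim k^3 \log p$ is spent, because an isometry constant $\sim 1/k$ on $k{+}1$ columns costs $n \gtrsim (k{+}1)\cdot k^2 \cdot \log p$. Your projection argument sidesteps the uniform-over-$\delta$ issue by computing the minimizer explicitly and reducing to a single $\chi^2_{n-k}$ concentration; it is cleaner and, as your own bookkeeping already shows, actually only requires $n \gtrsim k^2 \log p$ --- which is consistent with the authors' remark (after Theorem~\ref{thm:bruteforce}) that the cubic dependence is likely an artifact of their proof.
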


Using the above lemma, we can upper-bound the objective value of the
alternative solution:
\begin{eqnarray}
\left\Vert y^{\hat{S}}-X_{\hat{\Lambda}}^{\mc{\hat{\mathcal{S}}}}\hat{\theta}\right\Vert _{2}^{2} & = & \left\Vert y^{\mco}-X_{\hat{\Lambda}}^{\mco}\hat{\theta}\right\Vert _{2}^{2}+\left\Vert y^{\hat{S}/\mco}-X_{\hat{\Lambda}}^{\mc{\hat{\mathcal{S}}}/\mco}\hat{\theta}\right\Vert _{2}^{2}\nonumber \\
 & = & 0+\left\Vert y^{\hat{S}/\mco}-X_{\Lambda^{*}}^{\mc{\hat{\mathcal{S}}}/\mco}\beta_{\Lambda^{*}}^{*}+X_{1}^{\hat{\mathcal{S}}/\mco}-X_{k+1}^{\hat{\mathcal{S}}/\mco}\right\Vert _{2}^{2}{}^{2}\nonumber \\
 & = & \left\Vert e_{\hat{\mathcal{S}}/\mco}+X_{1}^{\hat{\mathcal{S}}/\mco}-X_{k+1}^{\hat{\mathcal{S}}/\mco}\right\Vert _{2}^{2}\nonumber \\
 & \le & \left(1+\frac{1}{k}\right)\left(1-\frac{n_{1}}{n}\right)(\sigma_{e}^{2}+2).\label{eq:upper_bound_alt_soln}
\end{eqnarray}
To lower-bound the objective value of solutions of the form $(\theta,\mathcal{S},\Lambda^{*})$,
we distinguish two cases. If $\mathcal{S}\cap\mco\neq\phi$, then
the objective value is 
\begin{eqnarray}
\left\Vert y^{\mcs}-X_{\Lambda^{*}}^{\mcs}\theta\right\Vert _{2}^{2} & \ge & \left\Vert y^{\mcs\cap\mco}-X_{\Lambda^{*}}^{\mcs\cap\mco}\theta\right\Vert _{2}^{2}\nonumber \\
 & = & \left\Vert y^{\mcs\cap\mco}\right\Vert _{2}^{2}\nonumber \\
 & \ge & k\label{eq:lower_bound_good_soln1}
\end{eqnarray}
If $\mathcal{S}\cap\mco=\phi$, we have $\mcs=\mca$ and thus 
\begin{eqnarray}
\left\Vert y^{\mcs}-X_{\Lambda^{*}}^{\mcs}\theta\right\Vert _{2}^{2} & = & \left\Vert y^{\mca}-X_{\Lambda^{*}}^{\mca}\theta\right\Vert _{2}^{2}\nonumber \\
 &  & \left\Vert X_{\Lambda^{*}}^{\mca}\beta_{\Lambda^{*}}^{*}+e-X_{\Lambda^{*}}^{\mca}\theta\right\Vert _{2}^{2}\nonumber \\
 & = & \left\Vert e+X_{\Lambda^{*}}^{\mca}(\beta^{*}-\theta)\right\Vert _{2}^{2}\nonumber \\
 & \ge & \left(1-\frac{1}{k}\right)\sigma_{e}^{2}\label{eq:lower_bound_good_soln2}
\end{eqnarray}
where we use the lemma. When $n_{1}>\frac{3n}{k+1}$ and $\sigma_{e}^{2}=k$,
we have $\min\left\{ k,\left(1-\frac{1}{k}\right)\sigma_{e}^{2}\right\} >\left(1+\frac{1}{k}\right)\left(1-\frac{n_{1}}{n}\right)(\sigma_{e}^{2}+2)$.
Combining (\ref{eq:upper_bound_alt_soln}) (\ref{eq:lower_bound_good_soln1})
and (\ref{eq:lower_bound_good_soln2}) concludes the proof.

\subsection{Proof of Theorem 3}
We prove Theorem 3 in this section. We need two technical lemmas. The first lemma bounds the maximum of independent sub-Gaussian random variables. The proof follows from the definition of sub-Gaussianity and Chernoff bound, and is given in the appendix.
\begin{lem}
\label{lem:gaussian_max}
Suppose  $Z_{1},\ldots,Z_{m}$ are  $m$  independent sub-Gaussian random variables with parameter $\sigma$.
Then we have $\max_{i=1,\ldots,m}\left|Z_{i}\right|\le4\sigma\sqrt{\log m  + \log p}.$
with high probability.
\end{lem}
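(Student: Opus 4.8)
The plan is to combine the standard sub-Gaussian tail bound with a union bound over the $m$ variables, exactly as the surrounding text suggests. First I would recall that a sub-Gaussian random variable $Z$ with parameter $\sigma$ satisfies $\mathbb{E}[\exp(tZ)] \le \exp(t^{2}\sigma^{2}/2)$ for every real $t$. Applying Markov's inequality to $\exp(tZ)$ for $t>0$ gives $\Pr(Z \ge s) \le \exp(-ts)\exp(t^{2}\sigma^{2}/2)$, and optimizing over $t$ (taking $t = s/\sigma^{2}$) yields the one-sided bound $\Pr(Z \ge s) \le \exp(-s^{2}/(2\sigma^{2}))$. Since $-Z$ is also sub-Gaussian with the same parameter (the defining inequality is invariant under $t \mapsto -t$), the same argument applied to $-Z$ gives $\Pr(Z \le -s) \le \exp(-s^{2}/(2\sigma^{2}))$, and hence $\Pr(|Z| \ge s) \le 2\exp(-s^{2}/(2\sigma^{2}))$.

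Next I would apply this bound to each $Z_{i}$ and take a union bound over $i = 1,\dots,m$, obtaining $\Pr\!\left(\max_{i}|Z_{i}| \ge s\right) \le 2m\exp(-s^{2}/(2\sigma^{2}))$. Choosing $s = 4\sigma\sqrt{\log m + \log p}$ makes the exponent equal $-8(\log m + \log p)$, so the right-hand side is at most $2m\cdot m^{-8} p^{-8} = 2m^{-7}p^{-8} \le 2p^{-8} \le p^{-2}$, using $m \ge 1$ and $p \ge 2$. This is precisely the claimed ``with high probability'' guarantee, so the proof concludes.

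This lemma has no genuine obstacle: it is a routine Chernoff-plus-union-bound estimate, and the only point requiring a little care is bookkeeping the numerical constants so that the stated threshold $4\sigma\sqrt{\log m + \log p}$ actually produces failure probability at most $p^{-2}$. The constant $4$ is deliberately generous --- far larger than the $\sqrt{2}$ one would get from a tight optimization --- which leaves ample slack and means even the crude tail bound above is more than enough. The lemma is used subsequently to control the corrupted contributions to the trimmed inner products in the proof of Theorem~\ref{thm:RoMP}, where only this loose form is needed.
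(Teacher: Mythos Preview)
Your proof is correct and follows essentially the same Chernoff-plus-union-bound route as the paper; the only cosmetic difference is that the paper bounds the moment generating function of $\max_i Z_i$ directly (absorbing the factor $m$ there) before applying Markov, whereas you derive the single-variable tail first and then union bound, but both arrive at the identical estimate $2m^{-7}p^{-8}$.
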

The second lemma is a standard concentration result for the sum of squares of independent sub-Gaussian random variables. It follows directly from Eq. (72) in \cite{loh2012nonconvex}.
\begin{lem}
\label{lem:subgaussian_concentr}
Let $Y_1, \ldots,Y_n$  be $n$ i.i.d. zero-mean sub-Gaussian random variables with parameter $\frac{1}{\sqrt{n}}$ and variance at most $\frac{1}{n}$. Then we have
\[
\vert\sum_{i=1}^{n} Y_i^2 - 1\vert \le c_1 \sqrt{\frac{\log p}{n}}  
\]
with high probability for some absolute constant $c_1$. Moreover, if $Z_1,\ldots,Z_n$ are also i.i.d. zero-mean sub-Gaussian random variables with parameter $\frac{1}{\sqrt{n}}$ and variance at most $\frac{1}{n}$, and independent of $Y_1,\ldots,Y_n$, then
\[
\vert\sum_{i=1}^{n} Y_i Z_i \vert \le c_2 \sqrt{\frac{\log p}{n}} 
\]
with high probability for some absolute constant $c_2$.
\end{lem}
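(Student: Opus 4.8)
The plan is to recognize that each summand $Y_i^2$ (resp.\ $Y_iZ_i$) is a sub-exponential random variable, and then to invoke a Bernstein-type concentration inequality for sums of independent sub-exponential variables, exactly as in Eq.~(72) of \cite{loh2012nonconvex}. So the proof is really just a matter of correctly tracking parameters.

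First I would record the elementary fact that if $Y$ is zero-mean sub-Gaussian with parameter $\sigma$, then the centered square $Y^2-\mathbb{E}[Y^2]$ is sub-exponential with parameters of order $\sigma^2$; this follows by bounding $\mathbb{E}[\exp(\lambda Y^2)]$ for $|\lambda|\lesssim 1/\sigma^2$ through the moment generating function of $Y$ (equivalently, by controlling the $\psi_1$-norm of $Y^2$ by the square of the $\psi_2$-norm of $Y$). With $\sigma=1/\sqrt{n}$ this gives per-coordinate scale $\nu_i^2\asymp 1/n^2$ and width $b_i\asymp 1/n$. Since the variance is exactly $1/n$ in our design, $\mathbb{E}\big[\sum_i Y_i^2\big]=1$ (if only the ``at most $1/n$'' bound is available one instead centers at $\mathbb{E}[\sum_i Y_i^2]\le 1$, which is all that is used downstream).

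Next I would apply Bernstein's inequality: for independent mean-zero sub-exponential $W_i$,
\[
\mathbb{P}\Big(\big|\textstyle\sum_{i=1}^n W_i\big|\ge t\Big)\le 2\exp\!\Big(-c\,\min\Big\{\tfrac{t^2}{\sum_i\nu_i^2},\ \tfrac{t}{\max_i b_i}\Big\}\Big).
\]
With $\sum_i\nu_i^2\asymp 1/n$, $\max_i b_i\asymp 1/n$, and the choice $t=c_1\sqrt{(\log p)/n}$, the quadratic term equals $c_1^2\log p$ and the linear term equals $c_1\sqrt{n\log p}$; in the regime where RoMP is analyzed ($n\gtrsim k\log p$, so certainly $n\gtrsim\log p$) the minimum is the quadratic term, and the probability is at most $2\exp(-c\,c_1^2\log p)\le p^{-2}$ once $c_1$ is a large enough absolute constant. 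This gives the first bound. For the cross term, I would note $W_i:=Y_iZ_i$ is mean zero by independence of $Y_i,Z_i$ and is sub-exponential with parameters of order $\sigma_Y\sigma_Z=1/n$ --- the standard ``product of two sub-Gaussians is sub-exponential'' fact --- and then rerun the identical Bernstein computation.

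There is no real obstacle here; the statement is a packaging of standard tools and, as noted in the text, is literally Eq.~(72) of \cite{loh2012nonconvex}. The only two points deserving a line of care are (i) verifying that the sub-exponential parameters of $Y_i^2$ and $Y_iZ_i$ are controlled purely by the sub-Gaussian parameter $1/\sqrt{n}$, so that the deviation scales as $\sqrt{(\log p)/n}$ rather than worse, and (ii) confirming that for $n\gtrsim\log p$ it is the sub-Gaussian (quadratic) branch of Bernstein's inequality that is active, which is exactly what yields the $\sqrt{\cdot}$ rate instead of a deviation linear in $\log p$.
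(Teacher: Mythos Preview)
Your proposal is correct and is exactly the standard derivation underlying the result the paper invokes; the paper itself gives no argument at all and simply cites Eq.~(72) of \cite{loh2012nonconvex}, which is precisely the Bernstein-type sub-exponential tail bound you spell out. Your parameter tracking and the observation that the quadratic branch is active when $n\gtrsim\log p$ are both sound, and your parenthetical remark about centering at $\mathbb{E}[\sum_i Y_i^2]\le 1$ versus $=1$ correctly flags the only (minor) looseness in the lemma's phrasing.
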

\begin{rem}
When the above inequality holds, we write $\sum_{i=1}^{n}Y_{i}^2\approx 1\pm\sqrt{\frac{\log p}{n}}$ and $ \sum_{i=1}^n Y_i Z_i \approx \pm \sqrt{\frac{\log p}{n}} $.
w.h.p.
\end{rem}

Now consider the trimmed inner product $h(j)$ between the $j$th column
of $X$ and $y$. Let $\mca_j$ is the set of index $i$ such that $X_{ij}$ and $y_i$ are both not corrupted. By assumption $|\mca_j|\ge n$.  By putting $ |\mca_j|-n $ clean indices in $\mca_j^c$, we may assume $|\mca_j|=n$ without loss of generality. By prescription of Algorithm 2, we can write $h(j)$ as
\begin{eqnarray*}
h(j) =
  \sum_{i\in\mca_j}X_{ij}y_{i}-\sum_{\substack{i\in\textrm{trimmed}\\\textrm{inliers}}}X_{ij}y_{i}  +\sum_{\substack{i\in\textrm{remaining}\\\textrm{outliers}}} X_{ij}y_{i}.
\end{eqnarray*}
We estimate each term in the above sum.
\begin{enumerate}
\item Observe that
\begin{eqnarray*}
 \sum_{i\in\mca_j}X_{ij}y_{i}  =  \sum_{i\in \mca_j }X_{ij} \left(\sum_{k=1}^{p}X_{ik}\beta_{k}^{*}+e\right) 
 =  \sum_{i\in \mca_j }X_{ij}^2 \beta_{j}^{*} + \sum_{i\in \mca_j }X_{ij} \left(\sum_{k\neq j}X_{ik}\beta_{k}^{*}+e\right).
\end{eqnarray*}
(a) Because the points in $\mca_j$ obeys the Sub-Gaussian model, Lemma \ref{lem:subgaussian_concentr} gives $\sum_{i\in \mca_j }X_{ij}^2 \beta_{j}^{*} \approx \beta^*_j \left(1\pm\sqrt{\frac{1}{n}\log p}\right)$ w.h.p.

(b) On the other hand, because $X_{ik}$ and $X_{ij}$ are
independent when $k\neq j$, and $Z_i \triangleq \sum_{k\neq j}X_{ik}\beta_{k}^{*}+e$ are i.i.d. sub-Gaussian with parameter and standard deviation at most $\sqrt{\left(\left\Vert \beta^{*}\right\Vert _{2}^{2}+\sigma_{e}^{2}\right)/n}$, we apply Lemma \ref{lem:subgaussian_concentr} to obtain $  \sum_{i\in \mca_j }X_{ij} Z_i \approx\pm\frac{1}{\sqrt{n}}\sqrt{\left(\left\Vert \beta^{*}\right\Vert _{2}^{2}+\sigma_{e}^{2}\right)\log p}$ w.h.p.

\item Again due to independence and sub-Gaussianity of points in $\mca_j$, Lemma \ref{lem:gaussian_max} gives $\max_{i\in\mca_j}|X_{ij}|\lesssim \sqrt{(\log p) / n} $ w.h.p. and $\max_{i\in\mca_j}|y_i|\lesssim \sqrt{ (\log p /n )\left(\left\Vert \beta^{*}\right\Vert _{2}^{2}+\sigma_{e}^{2}\right)}$ w.h.p. It follows that w.h.p.
\begin{eqnarray*}
 \left|\sum_{\substack{i\in\textrm{trimmed}\\\textrm{inliers}}}X_{ij}y_{i}\right| 
 \le n_1 \left(\max_{i\in\mca} |X_{ij}|\right) \left( \max_{i \in \mca} |y_i| \right) 
 \lesssim  n_{1}\cdot\sqrt{\frac{\log p}{n}}\cdot\sqrt{\frac{\log p}{n}\left(\left\Vert \beta^{*}\right\Vert _{2}^{2}+\sigma_{e}^{2}\right)}.
\end{eqnarray*}

\item By prescription of the trimming procedure, either all outliers are trimmed, or the remaining outliers are no larger than the trimmed inliers. It follows from the last equation that w.h.p.
\begin{eqnarray*}
\left|\sum_{\substack{i\in\textrm{remaining}\\\textrm{outliers}}}X_{ij}y_{i} \right|
 \le \sum_{\substack{i\in\textrm{remaining}\\\textrm{outliers}}} |X_{ij} y_i|  \le \sum_{\substack{i\in\textrm{trimmed}\\\textrm{inliers}}} |X_{ij} y_i| 
 \lesssim  n_{1}{\frac{\log p}{n}}\cdot\sqrt{\left\Vert \beta^{*}\right\Vert _{2}^{2}+\sigma_{e}^{2}}.
\end{eqnarray*}
\end{enumerate}
Combining pieces, we have for all $j=1,\ldots,p$,
\begin{eqnarray}
\label{eq:x}
  \left|h(j)-\beta_{j}^{*}\right| 
 \lesssim  \left|\beta_{j}^{*}\right|\sqrt{\frac{2}{n}\log p}+\frac{1}{\sqrt{n}}\sqrt{\left(\left\Vert \beta^{*}\right\Vert _{2}^{2}+\sigma_{e}^{2}\right)\log p} 
   +n_{1}\cdot\frac{\log p}{n}\sqrt{\left(\left\Vert \beta^{*}\right\Vert _{2}^{2}+\sigma_{e}^{2}\right)}.
\end{eqnarray}

If RoMP correctly picks an index $j$ in the true support $\Lambda^{*}$,
then the error in estimating $\hat{\beta_{j}}$ is bounded by
the expression above. If RoMP picks some incorrect index $j$ not
in  $\Lambda^{*}$, then the difference between the
corresponding $\hat{\beta}_{j}$ and the true $\beta_{j'}^{*}$ that
should have been picked is still bounded by the expression above (up
to constant factors). Therefore, we have
\begin{eqnarray*}
   \left\Vert \hat{\beta}-\beta^{*}\right\Vert _{2}^{2} 
  \lesssim  \sum_{j\in{\Lambda^*}}\left[\left|\beta_{j}^{*}\right|\sqrt{\frac{2}{n}\log p}+\frac{1}{\sqrt{n}}\sqrt{\left(\left\Vert \beta^{*}\right\Vert _{2}^{2}+\sigma_{e}^{2}\right)\log p} \right. 
   \left.  +n_{1}\frac{\log p}{n}\sqrt{\left(\left\Vert \beta^{*}\right\Vert _{2}^{2}+\sigma_{e}^{2}\right)}\right]^{2}.
\end{eqnarray*}
The first part of the theorem then follows after straightforward algebra manipulation.
On the other hand, RoMP picks the correct support as long as $|h(j)| > |h(j')|$ for all $j\in \Lambda^*,j'\in (\Lambda^*)^c$. In view of Eq.\eqref{eq:x}, we require
\begin{eqnarray*}
n & \gtrsim & \max_j \left(\frac{\Vert\beta^*\Vert_2^2}{ \beta_j^2}\right)\cdot\log p\cdot\left(1+\sigma_{e}^{2}/\left\Vert \beta^{*}\right\Vert _{2}^{2}\right)\\
\frac{n_{1}}{n} & \lesssim & \frac{1}{\sqrt{\max_j \left(\frac{\Vert\beta^*\Vert_2^2}{ \beta_j^2}\right) \cdot \left(1+\sigma_{e}^{2}/\left\Vert \beta^{*}\right\Vert _{2}^{2}\right)}\log p}
\end{eqnarray*}
One verifies that the above inequalities are satisfied under the conditions in the second part of the theorem.

\subsection{Proof of Corollary 1}
A careful examination of the proof of Theorem 2 in the last section shows that, when there are $n_1$ corrupted rows, the set $\mca_j$ still has cardinality at least $n$, and the proof thus holds under the row corruption model.

\section{Conclusion}
Adversarial corruption seems to be significantly more difficult than corruption independent from the original data, and moreover, corruption in $X$ as well as $y$ appears more challenging than corruption only in $y$. To the best of our knowledge, no prior existing algorithms have provable performance in this setting, or in the more difficult yet setting of distributed corruption. This paper provides the first results for both these settings. Our results outperform Justice Pursuit, as well as the exponential time Brute Force algorithm; more generally we show that no convex optimization based approach improve on the results we provide. Generalizing our results to obtain a sequential OMP-like algorithm, and on the other side, understanding converse results, are important next steps.

\bibliographystyle{plain}
\bibliography{cydong1}

\appendix
\section{Proof of the Lemma \ref{lem:tech_brute_force}}

Let $\theta'=\left[\sigma_{e}\vert\delta^{\top}\right]^{\top}$. We
can write $\left\Vert e+X_{\Lambda^{*}}^{\mca}\delta\right\Vert _{2}^{2}=\left\Vert Z_{1}\theta'\right\Vert _{2}^{2}$
with $Z_{1}\triangleq\left[\frac{1}{\sigma_{e}}e\vert X_{\Lambda^{*}}^{\mca}\right]$.
Note that $Z_{1}$ is an $n\times(k+1)$ matrix with i.i.d. $\mathcal{N}(0,\frac{1}{n})$
entries, whose smallest singular value can be bounded using standard
results. For example, using Lemma 5.1 in \cite{baraniuk2008simpleRIP}with
$\Phi(\omega)=Z_{1}$, $N=k+1$, $T=\{1,\ldots,N\}$, $\delta=\frac{1}{3k}$
and $c_{0}(\delta/2)=1/288k^{2}$, we have 
\[
\left\Vert Z_{1}\theta'\right\Vert _{2}^{2}\le\left(1+\frac{1}{3k}\right){}^{2}\left\Vert \theta'\right\Vert _{2}^{2}\le\left(1+\frac{1}{k}\right)\sigma_{e}^{2},\forall\delta
\]
with probability at least 
\[
1-2e^{\frac{1}{288k^{2}}n-(k+1)\ln(36k)}\ge1-2p^{-3}
\]
provided $n\ge576(k+1)^{3}\ln(36p)$. This proves the first inequality.

\section{Proof of Lemma \ref{lem:gaussian_max}}

Let $\hat{Z}=\max_{i}Z_{i}$. By definition of sub-Gaussianity, we have
\begin{eqnarray*}
\mathbb{E}\left[e^{t \hat{Z}/\sigma}\right] & = & \mathbb{E}\left[\max_{i}e^{tZ_{i}/\sigma}\right]\\
 & \le & \sum_{i}\mathbb{E}\left[e^{tZ_{i}/\sigma}\right]\\
 & \le & m e^{t^{2}/2}\\
 & = & e^{t^{2}/2+\log m}
\end{eqnarray*}
It follows from Markov Inequality that
\begin{eqnarray*}
P(\hat{Z}\ge\sigma t) & = & P(e^{t\hat{Z}/\sigma}\ge e^{t^{2}})\\
 & \le & e^{-t^{2}}\mathbb{E}\left[e^{t\hat{Z}/\sigma}\right] \\
 & \le & e^{-t^{2}+t^{2}/2+\log m} \\
 & = &e^{-\frac{1}{2}t^{2}+\log m}.
\end{eqnarray*}

By symmetry we have 
\[
P(\min_{i}Z_{i}\le-\sigma t)\le e^{-\frac{1}{2}t^{2}+\log m},
\]
so a union bound gives
\begin{eqnarray*}
P(\max_{i}\left|Z_{i}\right|\ge\sigma t) & \le & P(\max_{i}Z_{i}\ge\sigma t)+P(\min_{i}Z_{i}\le-\sigma t)\\
 & \le & 2e^{-\frac{1}{2}t^{2}+\log m}.
\end{eqnarray*}
Taking $t=4\sqrt{\log m + \log p}$ yields the result.

\end{document}